\documentclass[11pt]{article}
\usepackage[letterpaper,top=0.78in,bottom=0.82in,left=0.88in,right=0.88in]{geometry}
\usepackage[T1]{fontenc}
\usepackage[utf8]{inputenc}
\usepackage{newtxtext}
\usepackage{amsmath,amssymb,amsthm,mathtools,needspace}
\usepackage{newtxmath}
\usepackage[scaled=0.92]{helvet}
\usepackage[varqu]{inconsolata}
\usepackage{booktabs,tabularx,array,longtable}
\usepackage{graphicx}
\usepackage{tikz}
\usetikzlibrary{arrows.meta,positioning,fit,calc,backgrounds,shapes.geometric,decorations.pathreplacing,matrix}
\usepackage{xcolor}
\usepackage{microtype}
\usepackage{enumitem}
\usepackage{caption}
\usepackage{float}
\usepackage[numbers,sort&compress]{natbib}
\usepackage{titlesec}
\usepackage{xurl}
\usepackage{placeins}
\usepackage{setspace}
\usepackage[hidelinks]{hyperref}

\definecolor{FFBlue}{HTML}{4C78A8}
\definecolor{FFBlueLight}{HTML}{EAF1F8}
\definecolor{FFGold}{HTML}{D3A52F}
\definecolor{FFGoldLight}{HTML}{FBF4DF}
\definecolor{FFRed}{HTML}{B56470}
\definecolor{FFRedLight}{HTML}{F8ECEE}
\definecolor{FFGray}{HTML}{6A7179}
\definecolor{FFGrayDark}{HTML}{3F454B}
\definecolor{FFRule}{HTML}{D3D7DB}
\definecolor{FFLight}{HTML}{F7F8F9}
\definecolor{FFInk}{HTML}{111315}
\color{FFInk}
\setlist{leftmargin=*,topsep=3pt,itemsep=1.8pt,parsep=0pt,partopsep=0pt}
\newcolumntype{L}[1]{>{\raggedright\arraybackslash}p{#1}}
\newcolumntype{Y}{>{\raggedright\arraybackslash}X}
\hypersetup{
  pdftitle={Intrinsic-Extrinsic Coupling in Learning Dynamics},
  pdfauthor={Qinyou Wang},
  pdfsubject={Observation-relative hidden learning structure, intrinsic-extrinsic coupling, and coordination},
  pdfkeywords={intrinsic-extrinsic coupling, hidden learning state, observation-relative fiber, continuation-conditioned value, coordination},
  pdfcreator={LaTeX with TikZ/PGF},
  pdfdisplaydoctitle=true
}
\titleformat{\section}{\large\bfseries}{\thesection}{0.68em}{}
\titleformat{\subsection}{\normalsize\bfseries}{\thesubsection}{0.60em}{}
\titleformat{\subsubsection}{\normalsize\bfseries\itshape}{\thesubsubsection}{0.52em}{}
\titlespacing*{\section}{0pt}{12pt plus 3pt minus 2pt}{4.5pt plus 1pt minus 1pt}
\titlespacing*{\subsection}{0pt}{8.5pt plus 2pt minus 1pt}{2.8pt plus 1pt minus 1pt}
\titlespacing*{\subsubsection}{0pt}{6.5pt plus 1.5pt minus 1pt}{2.2pt}

\newtheoremstyle{ffplain}{6pt}{6pt}{\itshape}{}{\bfseries}{.}{0.5em}{}
\newtheoremstyle{ffdefinition}{6pt}{6pt}{\normalfont}{}{\bfseries}{.}{0.5em}{}
\newtheoremstyle{ffremark}{5.5pt}{5.5pt}{\normalfont}{}{\itshape}{.}{0.5em}{}
\theoremstyle{ffdefinition}
\newtheorem{definition}{Definition}[section]
\theoremstyle{ffplain}
\newtheorem{proposition}{Proposition}[section]
\newtheorem{corollary}[proposition]{Corollary}
\theoremstyle{ffremark}

\makeatletter
\renewcommand{\maketitle}{%
  \begin{center}
    {\fontsize{19}{22.5}\selectfont\bfseries \@title\par}
    \vspace{1.15em}
    {\normalsize \@author\par}
  \end{center}
  \vspace{0.50em}
}
\makeatother

\renewenvironment{abstract}{%
  \begin{center}\begin{minipage}{0.92\linewidth}\small
  \begin{center}\bfseries Abstract\end{center}\vspace{-0.45em}
}{%
  \end{minipage}\end{center}\vspace{0.35em}
}

\numberwithin{equation}{section}
\newcommand{\Id}{\mathrm{Id}}
\newcommand{\R}{\mathbb{R}}

\newcommand{\norm}[1]{\left\lVert#1\right\rVert}
\tikzset{
  ffbox/.style={
    draw=FFGrayDark,
    line width=0.66pt,
    rounded corners=1.1pt,
    align=center,
    inner xsep=2.7mm,
    inner ysep=1.8mm,
    font=\sffamily\footnotesize,
    text=FFInk
  },
  ffneutral/.style={ffbox,fill=FFLight},
  ffblue/.style={ffbox,fill=FFBlueLight},
  ffgold/.style={ffbox,fill=FFGoldLight},
  ffred/.style={ffbox,fill=FFRedLight},
  ffwhite/.style={ffbox,fill=white},
  ffgray/.style={ffbox,fill=FFLight,draw=FFRule},
  ffdashframe/.style={
    draw=FFGray,
    line width=0.58pt,
    rounded corners=1.2pt,
    dashed,
    fill=white,
    inner sep=2.4mm,
    align=center,
    font=\sffamily\footnotesize,
    text=FFInk
  },
  ffarrow/.style={
    -{Latex[length=1.85mm,width=1.12mm]},
    draw=FFGrayDark,
    line width=0.70pt,
    rounded corners=1.0pt
  },
  ffarrowblue/.style={
    -{Latex[length=1.85mm,width=1.12mm]},
    draw=FFBlue,
    line width=0.78pt,
    rounded corners=1.0pt
  },
  ffarrowgold/.style={
    -{Latex[length=1.85mm,width=1.12mm]},
    draw=FFGold,
    line width=0.78pt,
    rounded corners=1.0pt
  },
  ffarrowred/.style={
    -{Latex[length=1.85mm,width=1.12mm]},
    draw=FFRed,
    line width=0.78pt,
    rounded corners=1.0pt
  },
  ffdasharrow/.style={
    -{Latex[length=1.72mm,width=1.04mm]},
    draw=FFGray,
    line width=0.58pt,
    dashed,
    rounded corners=1.0pt
  },
  ffline/.style={draw=FFGrayDark,line width=0.64pt},
  ffrule/.style={draw=FFRule,line width=0.58pt},
  ffdashline/.style={draw=FFGray,line width=0.56pt,dashed},
  ffpanel/.style={font=\sffamily\scriptsize\bfseries,text=FFGrayDark,anchor=west},
  fflabel/.style={font=\sffamily\scriptsize,text=FFInk,align=center},
  ffsubtle/.style={font=\sffamily\tiny,text=FFGray,align=center},
  ffmath/.style={font=\footnotesize,text=FFInk,align=center},
  ffdot/.style={circle,minimum size=2.5pt,inner sep=0pt,draw=none,fill=FFGrayDark}
}

\title{Intrinsic--Extrinsic Coupling in Learning Dynamics}
\author{Qinyou Wang}
\date{}
\begin{document}
\maketitle
\thispagestyle{plain}

\begin{abstract}
A learner's current observations need not determine its response to further training. We formulate intrinsic--extrinsic coupling through the continuation-conditioned value of a constrained learning-state intervention, with observation-relative fibers describing present agreement. An executable finite-frame classifier-head write protects current logits while repairing specified historical margins under finite-precision acceptance checks. We distinguish local admissibility, continuation-conditioned intervention value, and complete-policy performance. A matched four-cell contrast identifies readout-specific non-additivity between the same intrinsic intervention and alternative external continuations. In a CLINC-derived class-incremental setting, replay changes the write's 32-update contribution from five correct predictions to zero. Nonzero interactions also occur under output distillation, with a RoBERTa backbone, and under optimizer-native momentum stochastic gradient descent with decoupled weight decay (SGDW). Under SGDW, correct-count interactions are negative in all three activated roots at 128 updates, showing that coupling need not imply positive synergy. The mathematical analysis distinguishes feasible local repairs and favorable terminal outputs from training-reachable repair regions. Separate coordination tests show that content controls match or exceed the development gain, while a five-root fresh-test comparison with Fiber present in every arm shows root-dependent rather than uniformly beneficial correct-count effects. On the secondary cross-entropy readout, guided allocation yields lower mean loss than standard replay in all five pairs. Together, these results make intrinsic--extrinsic coupling operational by connecting executable state geometry to continuation-conditioned value, matched interaction identification, and closed-loop coordination, while separating identified coupling from complete-policy performance.
\end{abstract}

\section{Introduction}
A learner's present behavior need not determine how it will learn next. \emph{Fiber Fingerprints} formalizes this distinction through controlled future-learning responses within present-behavior equivalence classes \cite{p1}. We use \emph{intrinsic learning dynamics} for the evolution and state-dependent training response of the learner's internal structure. ``Hidden'' is relative to a chosen observation, not a separate autonomous process. Observation-relative fibers are one description of these distinctions; intrinsic dynamics are the research object, not a synonym for that description.

We study direct intervention on learning state followed by continued external training. An intrinsic intervention changes selected state coordinates subject to a declared observation constraint. An extrinsic continuation specifies the subsequent training rule. Both operate on the same learner: an internal intervention changes the starting state for external updates, and those updates change the states available for later intervention. The two interfaces therefore act on the same evolving learner.

We formulate the question through \emph{continuation-conditioned value} and a readout-specific interaction. Fix a parent, one internal proposal, a future horizon, external randomness, and an endpoint utility. Continuing the identity/execute pair under each of two training policies gives four outcomes. Their mixed contrast asks whether the continuation changes the intervention's marginal value---equivalently, whether the intervention changes that continuation's incremental value. These are two readings of one forward-time experiment. Figure~\ref{fig:coupling} distinguishes this controlled identification from the feedback decisions of a complete coordinator.

A finite-frame Fiber actuator makes the internal interface executable. Its ideal head write protects a fixed current feature frame while repairing selected historical margins; the rounded implementation must pass explicit checks. Replay supplies the first extrinsic factor \cite{er}. At one common parent, the same write adds five correct predictions without replay and none with replay after 32 updates, although a margin difference persists. Further four-cell comparisons extend the observation to distillation based on Learning without Forgetting (LwF) \cite{lwf}, RoBERTa \cite{roberta}, and momentum stochastic gradient descent (SGD) with decoupled weight decay (SGDW). In the optimizer-native study, the construction and task phase are fixed, while each optimizer generates its own parent trajectory and legal activation. Nonzero interaction does not require identical trajectories, signs, or effect sizes.

We connect admissible state changes, their value under specified future learning, and the performance of policies that repeatedly coordinate the two. Observation-relative geometry determines what a finite intervention preserves; a matched continuation determines what it changes in future outcomes; a complete-policy test determines whether a fixed coordination rule is useful across training histories. We separately test a fixed replay-allocation rule under a common periodic Fiber actuator. On the development root, a replay-reinforcement rule reaches 953 rather than 950 correct predictions with the same Fiber consultation schedule in both policies and equal replay counts, but registered random-content and permuted-signal controls match or exceed it. Across five fresh roots on an unused 1,500-item test split, the correct-count effect of the fixed guided allocation rule varies by root rather than remaining uniformly beneficial. Lower mean cross-entropy (CE) under guided allocation in all five roots is a secondary result, not a replacement for the primary outcome. Thus the matched branches identify coupling, and the five-root policy test characterizes the root-dependent value of a fixed allocation rule.

The mathematical analysis distinguishes locally feasible parameter repairs and favorable terminal outputs from training-reachable regions of favorable future control. Neither a local feasibility result nor a favorable output path identifies a long-horizon repair basin or shows that Fiber-selected times are uniquely valuable. Similar terminal accuracy does not require internal-state convergence. The local repair set has an exact affine-ball characterization, and saved logits certify favorable output paths for the development controls.

State/output distinctions, sequential value, and factorial contrasts are classical \cite{hk,bellman,rubin,factorial}; null-space editing and edit retention under subsequent training are established topics \cite{alphaedit,retention}. \emph{Revelation Control} already studies priced intervention choice, productive reuse, and state-dependent continuation value \cite{p2}. The experiments establish scoped instances, not a complete evolution law or benchmark-wide performance advantage. Our contribution is to connect these elements operationally through a specified observation constraint, an implemented state intervention, matched future-learning comparisons, and explicit policy-attribution boundaries.

\begin{figure}[!t]
\centering
\resizebox{0.96\linewidth}{!}{
\begin{tikzpicture}[x=1cm,y=1cm]
  \path[use as bounding box] (0,0) rectangle (14.80,7.70);
  \node[ffpanel] at (0,7.46) {(a) PRESENT EQUIVALENCE};
  \node[ffpanel] at (5.40,7.46) {(b) ONE PAIR, TWO CONTINUATIONS};
  \draw[draw=FFGray,line width=.65pt,dashed,fill=FFLight]
    (2.25,5.60) ellipse [x radius=2.12cm,y radius=1.02cm];
  \node[ffmath] at (2.25,6.27) {$\mathcal F_B(x_t)$};
  \coordinate (x0) at (1.07,5.54);
  \coordinate (x1) at (3.42,5.54);
  \node[ffdot] at (x0) {};
  \node[ffdot,fill=FFBlue] at (x1) {};
  \draw[ffarrowblue,shorten <=3pt,shorten >=3pt] (x0)--node[above,ffmath]{$\Phi_F$}(x1);
  \node[ffmath,anchor=north] at (1.07,5.35) {$x_t$};
  \node[ffmath,anchor=north] at (3.42,5.35) {$x_t^+$};
  \node[ffmath] at (2.25,4.31) {$B(x_t^+)=B(x_t)$};
  \node[fflabel,text=FFGrayDark] at (2.25,3.89) {Same present observation};
  \node[fflabel,text=FFGrayDark] at (2.25,3.51) {Future responses need not agree};
  \node[ffmath] at (8.13,6.90) {$\pi_0$};
  \node[ffmath] at (12.65,6.90) {$\pi_A$};
  \node[fflabel,anchor=east] at (5.90,6.00) {Identity\\$f=0$};
  \node[fflabel,anchor=east] at (5.90,3.80) {Execute\\$f=1$};
  \node[ffneutral,minimum width=3.48cm,minimum height=1.02cm,inner ysep=1.1mm,font=\small] (y00) at (8.13,6.00)
    {$\mathcal T_{t,H}^{\pi_0,\omega}(x_t)$\\[2pt]$\xrightarrow{\ J_Q\ }Y_{00}$};
  \node[ffgold,minimum width=3.48cm,minimum height=1.02cm,inner ysep=1.1mm,font=\small] (y01) at (12.65,6.00)
    {$\mathcal T_{t,H}^{\pi_A,\omega}(x_t)$\\[2pt]$\xrightarrow{\ J_Q\ }Y_{01}$};
  \node[ffblue,minimum width=3.48cm,minimum height=1.02cm,inner ysep=1.1mm,font=\small] (y10) at (8.13,3.80)
    {$\mathcal T_{t,H}^{\pi_0,\omega}(x_t^+)$\\[2pt]$\xrightarrow{\ J_Q\ }Y_{10}$};
  \node[ffgold,minimum width=3.48cm,minimum height=1.02cm,inner ysep=1.1mm,font=\small] (y11) at (12.65,3.80)
    {$\mathcal T_{t,H}^{\pi_A,\omega}(x_t^+)$\\[2pt]$\xrightarrow{\ J_Q\ }Y_{11}$};
  \draw[ffdashline] (y00.east)--node[above=2pt,ffmath,font=\small,inner sep=1pt]{$V_A^0$}(y01.west);
  \draw[ffdashline] (y10.east)--node[above=2pt,ffmath,font=\small,inner sep=1pt]{$V_A^1$}(y11.west);
  \draw[ffdashline] (y00.south)--node[right=2pt,ffmath,font=\small,inner sep=1pt]{$V_F^0$}(y10.north);
  \draw[ffdashline] (y01.south)--node[right=2pt,ffmath,font=\small,inner sep=1pt]{$V_F^1$}(y11.north);
  \draw[ffrule] (0,2.69)--(14.80,2.69);
  \node[ffmath,font=\small] at (7.40,2.31)
    {$I_{F,A}=Y_{11}-Y_{01}-Y_{10}+Y_{00}=V_F^1-V_F^0=V_A^1-V_A^0$};
  \node[fflabel,text=FFGrayDark] at (7.40,1.89) {One readout-specific interaction; two conditional-effect interpretations};
  \draw[ffrule] (0,1.54)--(14.80,1.54);
  \node[ffpanel] at (0,1.27) {(c) FORWARD LEARNING AND LATER COORDINATION};
  \node[ffmath,minimum width=1.1cm] (post) at (.73,.49) {$x_t^+$};
  \node[ffgold,minimum width=3.25cm,minimum height=.67cm] (ordinary) at (4.28,.49)
    {External update $\mathcal U_{t+1}^{a,\omega}$};
  \node[ffmath,minimum width=1.3cm] (next) at (8.02,.49) {$x_{t+1}$};
  \node[ffneutral,minimum width=3.95cm,minimum height=.72cm] (later) at (12.47,.49)
    {Later intrinsic decision\\and external continuation};
  \draw[ffarrow] (post.east)--(ordinary.west);
  \draw[ffarrow] (ordinary.east)--(next.west);
  \draw[ffarrow] (next.east)--node[above,fflabel]{State signal}(later.west);
\end{tikzpicture}}
\caption{\textbf{From observation-relative structure to interaction and coordination.} (a) An ideal intrinsic move lies in a declared present-observation fiber; the outline asserts neither manifold structure nor completeness of that description. (b) The identity/execute pair is continued under two external rules at fixed parent, proposal, horizon, randomization, and readout. Dashed connectors compare outcomes, not branch states evolving into one another. The two conditional-effect readings belong to one contrast. (c) A complete coordinator can use later state signals to change future decisions. All evolution is forward in time; subsequent writes are disabled only in the isolated experiment (b).}
\label{fig:coupling}
\end{figure}

\section{Intrinsic--extrinsic coupling: an observation-relative framework}
\label{sec:framework}
\subsection{Full state and observation-relative fibers}
Let $x_t$ denote the full learning state after ordinary update $t$ and its associated state operations, before an intrinsic decision. It includes the model and optimizer state, counters, data cursor, random-generator and precision state, and any memory, teacher, or coordinator state needed to determine subsequent evolution. In the implementation below, reservoir admission precedes the intrinsic decision. Diagnostic logs are not learning-state coordinates. Neither the chosen observation nor a scalar structural score is assumed sufficient for future learning. Random-generator state records endogenous execution history; $\omega$ below supplies the declared external inputs and keyed randomization convention.

The full state has both discrete and continuous components. Differential statements below are made on a fixed-dimensional continuous chart, with task/head configuration and other discrete components held fixed. Head births and reservoir admissions are handled by the transition map, not by differentiating across a change of state dimension.

\Needspace{11\baselineskip}
\begin{definition}[Observation-relative fiber]
For a specified observation map $B$, the exact fiber through $x$ is
\begin{equation}
 \mathcal F_B(x)=\{x':B(x')=B(x)\}.
 \label{eq:fiber}
\end{equation}
On a continuous chart where $B$ is differentiable, its infinitesimally invisible subspace is
\begin{equation}
 \boxed{\mathcal K_x=\ker DB_x=\{v:DB_xv=0\}.}
 \label{eq:tangent}
\end{equation}
\end{definition}
If $B$ is $C^1$ and has locally constant rank near $x$, the local fiber is a submanifold with tangent space $\mathcal K_x$. Without such regularity, Equation~\eqref{eq:tangent} remains a linearized constraint, not an identification of a smooth fiber. Even in the regular case, a straight finite write satisfying $DB_xv=0$ need not satisfy $B(x+v)=B(x)$. Exact finite preservation requires an additional construction. Section~\ref{sec:actuator} supplies one by restricting the action to a head-parameter slice on which the selected observation is linear.

``Hidden'' is therefore observation-relative. The Fiber Fingerprint description distinguishes equality under $B$ from equality under controlled future responses \cite{p1}. For fixed $B$, the level set is well defined, but the choice of $B$ and this descriptive approach are not asserted to exhaust the internal structure. A nonzero element of $\mathcal K_x$ is not thereby guaranteed to be visible under every continuation or useful for every readout. Neither future visibility nor positive value is included in the definition of the subspace. A present-observation fiber is neither an all-future indistinguishability class \cite{hk} nor necessarily invariant under training: a common continuation may separate its members. The response fingerprint in \cite{p1} contains future-response information beyond the level set alone.

\subsection{Intrinsic actuation and extrinsic continuation}
An intrinsic intervention acts on learning state; it is not the dynamics themselves. A proposal $F$ specifies a finite write and its observation and protected-state constraints. Let $\mathscr L_t(x,F)$ be the acceptance predicate. An accepted proposal induces $\Phi_F$, while $\Phi_0=\Id$. The certificate concerns present-state constraints; Section~\ref{sec:actuator} realizes them on a fixed feature frame.

For an intrinsic decision $f_t$, followed by an external learning action $a_{t+1}$, the execution order is
\begin{equation}
 x_t^+=\Phi_t^{f_t}(x_t),\qquad
 x_{t+1}=\mathcal U_{t+1}^{a_{t+1},\omega_{t+1}}(x_t^+).
 \label{eq:ordered}
\end{equation}
For the binary interface, $\Phi_t^{0}=\Id$ and $\Phi_t^{1}=\Phi_{F_t}$ for the proposal made at step $t$. The map $\mathcal U$ includes the next ordinary update and its associated state operations. A coordinator may use a current internal-state signal to choose $a_{t+1}$ or a later action; it cannot change the already executed update $t$.

``Intrinsic'' and ``extrinsic'' identify two operational interfaces, not independent physical subsystems. Both act on the same learner and may use overlapping historical information; their effects need not be orthogonal. This terminology is unrelated to intrinsic rewards and is not a taxonomy of all learning algorithms. The Fiber transaction realizes a constrained state reset between ordinary updates, not a controllability theorem.

Two forms of feedback are possible in Equation~\eqref{eq:ordered}. External learning changes the states on which later intrinsic decisions are made. Conversely, an intrinsic write changes the state from which an external rule operates; a coordinator can also explicitly use the accompanying signal to choose that rule. These are architectural possibilities. Whether their effects are nonzero is an empirical question.

\subsection{Continuation-conditioned intervention value}
Fix a step $t$, a pre-action parent $x$, and one accepted proposal $F$ constructed at that parent. Write $\pi$ for a fully specified future external policy. The continuation $\mathcal T_{t,H}^{\pi,\omega}$ executes the next $H$ ordinary updates, $t+1,\ldots,t+H$, with subsequent intrinsic interventions disabled; $\mathcal T_{t,0}^{\pi,\omega}=\Id$. The policy can be state dependent: the same rule need not produce identical branch-local updates or auxiliary states.

Let $J_Q$ be a scalar endpoint utility, with larger values preferred. The specification $Q$ includes the evaluation items and prediction/readout convention; in particular, the active class set must be declared. Correct count and negative CE are different utilities. A structural margin can be used as a separate readout, but is not interchangeable with correct count.

\begin{definition}[Pathwise single-action value]
At fixed $(t,x,F,H,\pi,\omega,Q)$, define
\begin{equation}
 \begin{aligned}
 V_F(x,H;\pi,\omega,Q)
 &=J_Q\!\left(\mathcal T_{t,H}^{\pi,\omega}(\Phi_F(x))\right)\\
 &\quad-J_Q\!\left(\mathcal T_{t,H}^{\pi,\omega}(x)\right).
 \end{aligned}
 \label{eq:value}
\end{equation}
\end{definition}
Policy-conditioned value is standard in sequential decision theory \cite{bellman,sutton}; Equation~\eqref{eq:value} specializes it to one constrained write. The branches share the exact parent and original proposal; identity does not solve a different repair. Matched $\omega$ fixes external inputs and randomization conventions, not post-treatment gradients, optimizer moments, or adaptive outputs. Equal dropout seeds need not yield identical batched masks when the batch changes.

An expected value additionally requires a distribution over $\omega$ and, when relevant, parents. Each mechanism contrast is pathwise. Independent-root summaries are reported only for the explicitly declared study designs in Section~\ref{sec:protocol}. In the replay realization, ``on'' means the declared retained schedule, and ``off'' removes replay but leaves current-task learning active.

When horizons use different seen-task panels $Q_H$, the reported object is $V_F(x,H;\pi,\omega,Q_H)$. Comparing its signs at two horizons is legitimate for that declared endpoint family, but does not isolate elapsed learning time from a changing evaluation population. A pure fixed-readout temporal claim would require a common $Q$ and common prediction convention.

A nonzero repair target, acceptance by $\mathscr L_t$, and positive $V_F$ are three different predicates. The first identifies a frame-level constraint violation, the second checks an admissible write, and the third compares future outcomes. Geometry and the acceptance test settle the first two questions; an explicit continuation and readout are needed for the third.

\subsection{Two readings of a four-cell interaction}
Fix $(t,x,F,H,\omega,Q)$ and two fully specified external policies $\pi_0,\pi_A$. Index the intrinsic decision by $f\in\{0,1\}$ and the external policy by $a\in\{0,1\}$, with $\Phi_1=\Phi_F$ and $\pi_1=\pi_A$. Here $a$ selects a continuation policy, whereas $a_{t+1}$ in Equation~\eqref{eq:ordered} is one update-level action. Define the four outcomes
\begin{equation}
 Y_{fa}=J_Q\!\left(\mathcal T_{t,H}^{\pi_a,\omega}(\Phi_f(x))\right).
 \label{eq:fouroutcomes}
\end{equation}
These are potential outcomes under two controlled intervention factors \cite{rubin,factorial}. The intrinsic-intervention effect under policy $a$ and the external-policy effect after decision $f$ are, respectively,
\begin{equation}
 V_F^a=Y_{1a}-Y_{0a},\qquad V_A^f=Y_{f1}-Y_{f0}.
 \label{eq:twoeffects}
\end{equation}
\begin{definition}[Readout-specific intrinsic--extrinsic interaction]
\begin{equation}
 \boxed{I_{F,A}=Y_{11}-Y_{01}-Y_{10}+Y_{00}.}
 \label{eq:interaction}
\end{equation}
\end{definition}
\begin{proposition}[Standard factorial contrast identity]
\label{prop:bilateral}
For any four real-valued outcomes,
\begin{equation}
 \boxed{I_{F,A}=V_F^1-V_F^0=V_A^1-V_A^0,}
 \label{eq:bilateral}
\end{equation}
and, for $f,a\in\{0,1\}$,
\begin{equation}
 Y_{fa}=Y_{00}+f\,(Y_{10}-Y_{00})+a\,(Y_{01}-Y_{00})+fa\,I_{F,A}.
 \label{eq:decomp}
\end{equation}
Thus $I_{F,A}=0$ exactly when this four-cell outcome table is additive in the two intervention indicators.
\end{proposition}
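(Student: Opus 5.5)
The argument is pure algebra on four real numbers, so no assumptions about $J_Q$, the continuations, or $\omega$ are needed beyond the fact that each $Y_{fa}$ is a well-defined real value. I will first establish Equation~\eqref{eq:bilateral} by direct substitution of Equation~\eqref{eq:twoeffects}. Expanding $V_F^1-V_F^0=(Y_{11}-Y_{01})-(Y_{10}-Y_{00})$ gives exactly the right-hand side of Equation~\eqref{eq:interaction}. Expanding $V_A^1-V_A^0=(Y_{11}-Y_{10})-(Y_{01}-Y_{00})$ gives the same four terms with the same signs, only regrouped. Both equalities hold without reference to how the outcomes were generated.

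For the decomposition in Equation~\eqref{eq:decomp}, the domain is the finite set $\{0,1\}^2$, so I will simply check the four cells. At $(0,0)$ the right side is $Y_{00}$. At $(1,0)$ it is $Y_{00}+(Y_{10}-Y_{00})=Y_{10}$, and symmetrically at $(0,1)$ it is $Y_{01}$. At $(1,1)$ it is $Y_{00}+(Y_{10}-Y_{00})+(Y_{01}-Y_{00})+I_{F,A}$. Substituting $I_{F,A}=Y_{11}-Y_{01}-Y_{10}+Y_{00}$, everything cancels except $Y_{11}$.

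For the final claim, I need to fix what ``additive'' means. I will take it to mean that there exist real constants $c,\alpha,\beta$ with $Y_{fa}=c+\alpha f+\beta a$ for all four cells.
\begin{itemize}
\item If $I_{F,A}=0$, Equation~\eqref{eq:decomp} already has this form, with $c=Y_{00}$, $\alpha=Y_{10}-Y_{00}$, and $\beta=Y_{01}-Y_{00}$.
\item Conversely, if such constants exist, substituting the additive form into Equation~\eqref{eq:interaction} gives $(c+\alpha+\beta)-(c+\beta)-(c+\alpha)+c=0$.
\end{itemize}
This is the only step where any care is needed: the notion of additivity must be stated explicitly so that the ``exactly when'' is a genuine equivalence. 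Once it is, the argument is a one-line cancellation. I therefore expect no real obstacle, and the proof is just an exercise in bookkeeping.
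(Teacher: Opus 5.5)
Your proof is correct and follows the paper's own route: expand both differences of conditional effects, check the decomposition cell by cell on $\{0,1\}^2$, and observe that an additive table has zero mixed contrast while a zero contrast removes the $fa$ term. Your explicit definition of additivity through constants $c,\alpha,\beta$ only makes precise what the paper leaves implicit.
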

\begin{proof}
Expanding either difference of effects in Equation~\eqref{eq:bilateral} gives Equation~\eqref{eq:interaction}. Equation~\eqref{eq:decomp} agrees with the four outcomes at $(0,0),(1,0),(0,1),(1,1)$. An additive table must have a zero mixed contrast, and a zero mixed contrast removes its $fa$ term.
\end{proof}

This elementary identity is useful because it gives two equally valid readings of one measured interaction: external continuation modifies an internal action's value, and the internal action modifies the incremental value of that external continuation. These are not two independent replications, and no causal influence backwards in time is implied. The physical order remains Equation~\eqref{eq:ordered}.

With matched four-cell interventions, a nonzero $I_{F,A}$ is evidence of non-additive effects on $J_Q$ at this parent. If larger $J_Q$ is better, positive $I$ is more-than-additive utility, not a guarantee that either individual effect or the joint effect is positive. Zero $I$ means additivity of this table only, not independence of states or equality of logits, other readouts, or other horizons. A nonlinear change of utility scale can change the contrast. In particular, nonzero correct-count interaction is not itself a proof of a smooth mixed derivative, a noncommuting update operator, or a universal dynamical coupling law; an explicit algebraic example is given in Appendix~\ref{app:contrast}.

\subsection{Single-action values versus closed-loop values}
A complete coordinator $\Pi$ specifies both the intrinsic decisions and the subsequent external allocation rules. Its state includes whatever finite history the policy uses. Starting from a common root and coupled $\omega$, define the terminal policy contrast
\begin{equation}
 \Delta J_Q(\Pi,\Pi_0;\omega)
 =J_Q(X_T^{\Pi,\omega})-J_Q(X_T^{\Pi_0,\omega}).
 \label{eq:policy}
\end{equation}
Here later parents, frames, accepted proposals, teacher snapshots, and allocations may differ. Holding the actuation algorithm and consultation schedule fixed does not hold the realized writes fixed after trajectories diverge. This is not Equation~\eqref{eq:value} with only a larger $H$. In particular, a local reinforcement fork and a full trajectory with subsequent reinforcement windows are different interventions.

One-shot effects measured at different parents cannot generally be summed to obtain a policy effect. For an exact telescoping construction, specify nested whole policies $\Pi^{(0)},\ldots,\Pi^{(m)}$ under a common root, $\omega$, and $Q$:
\begin{equation}
 \Delta J_Q(\Pi^{(m)},\Pi^{(0)};\omega)
 =\sum_{i=1}^{m}\Delta J_Q(\Pi^{(i)},\Pi^{(i-1)};\omega).
 \label{eq:telescoping}
\end{equation}
The summands depend on that nesting and its continuation rules. The identity does not imply context-free values for the individual actions. Reverting to the baseline rule after a previous intervention also does not restore the baseline state; identity is a present no-write decision, not a future quality certificate.

\subsection{Coordination under an explicit workload constraint}
For replay, the resource vector actually matched in the full coordination comparison is
\begin{equation}
 W_R(\Pi)=\bigl(N_{\mathrm{replay\ events}},N_{\mathrm{replay\ examples}}\bigr).
 \label{eq:workload}
\end{equation}
In the replay realization, a coordinator can redirect examples using a pre-action state signal while preserving $W_R$. The objective of higher terminal utility at fixed $W_R$ is distinct from lower workload at fixed utility. ``Closed-loop'' means that subsequent allocation depends on a signal acquired from the learner; it does not assert a stability theorem or an identified reduced-state model. Neither equates total cost: Fiber sensing, numerical certification, data selection, hardware execution, and elapsed time remain separate quantities. Matching teacher-forward counts in the second-rule application is another external-workload match, not a proof of total-compute equality.

Writing $\max_\Pi J_Q$ subject to $W_R(\Pi)\le W_0$ formulates a control objective; it does not assert that the tested coordinator solves it. Likewise, an improved complete policy is not by itself an identification of which part of the policy is necessary. Separating a state write from signal-only allocation, or testing the specificity of one Fiber-aligned allocation, requires corresponding controls. Section~\ref{sec:coord} reports attribution controls frozen after the initial development outcome and before their own outcomes, followed by independent-root coordination tests. These controls assess complete-policy coordination and selected components of Fiber-signal use separately from the matched one-shot interaction.

\section{Finite-frame realization of intrinsic actuation}
\label{sec:actuator}
\subsection{The exact head-space zero constraint}
We realize the intrinsic interface with a finite-frame classifier-head repair. Stack the active classifier weights and biases in $\Theta\in\R^{q\times d}$, with each encoder feature augmented by a last coordinate equal to one. In this implementation $d=769$ and $q$ is the number of currently active classes. The encoder is fixed during the transaction.

At entry, acquire one immutable inference-mode frame. Its current rows form $C\in\R^{n_c\times d}$ and its historical representative rows form $\mathsf H\in\R^{n_h\times d}$. The typeface distinguishes the historical matrix $\mathsf H$ from the future horizon $H$. Throughout the action, $C$, $\mathsf H$, the labels, and the active class configuration are held fixed. They may be reacquired at the next consultation.

The protected observation on this head slice is
\begin{equation}
 B_C(\Theta)=C\Theta^\top.
 \label{eq:BC}
\end{equation}
Therefore the finite-write action subspace is
\begin{equation}
 \boxed{\mathcal K_C=\{\Delta\Theta\in\R^{q\times d}:C\Delta\Theta^\top=0\}.}
 \label{eq:czero}
\end{equation}
Here $C$ is a feature matrix and $\Delta\Theta$ a classifier-head parameter increment. Because Equation~\eqref{eq:BC} is linear in $\Theta$,
\begin{equation}
 B_C(\Theta+\Delta\Theta)-B_C(\Theta)=C\Delta\Theta^\top
 \label{eq:exactfinite}
\end{equation}
holds exactly at any amplitude in real arithmetic. The fiber on this slice is the affine space $\Theta+\mathcal K_C$, whose dimension is $q(d-\operatorname{rank}C)$.

Figure~\ref{fig:finiteframe} specifies the finite mechanism within the general framework of Figure~\ref{fig:coupling}; $\Delta\Theta_*$ denotes the ideal repair in Proposition~\ref{prop:leastnorm}. This is the precise bridge from Equation~\eqref{eq:tangent} to a finite action: for fixed $C$, the derivative kernel and finite preserving-write subspace coincide. It does not hold by freezing only the symbol $B$ while recomputing a different frame after the action. Nor does it claim preservation on all network inputs; historical representative outputs are deliberately allowed to change.

\begin{figure}[!htbp]
\centering
\resizebox{0.96\linewidth}{!}{
\begin{tikzpicture}[x=1cm,y=1cm]
  \path[use as bounding box] (0,0) rectangle (14.45,4.10);
  \node[ffpanel] at (0,3.87) {FINITE HEAD WRITE: ONE STATE CHANGE, TWO READOUT CONSTRAINTS};
  \coordinate (A) at (.35,.72); \coordinate (B) at (5.70,1.34);
  \coordinate (C) at (7.05,3.06); \coordinate (D) at (1.70,2.44);
  \fill[FFBlueLight] (A)--(B)--(C)--(D)--cycle;
  \foreach \t in {.25,.5,.75}{
    \draw[draw=FFBlue!18,line width=.35pt] ($(A)!\t!(B)$)--($(D)!\t!(C)$);
    \draw[draw=FFBlue!18,line width=.35pt] ($(A)!\t!(D)$)--($(B)!\t!(C)$);
  }
  \draw[ffline] (A)--(B)--(C)--(D)--cycle;
  \node[ffmath,anchor=south] at (4.05,3.04) {$\Theta+\mathcal K_C$};
  \coordinate (th0) at (1.72,1.51); \coordinate (th1) at (4.65,2.05);
  \node[ffdot] at (th0) {}; \node[ffdot] at (th1) {};
  \node[ffmath,anchor=north] at (1.72,1.38) {$\Theta$};
  \node[ffmath,anchor=south] at (4.60,2.18) {$\Theta+\Delta\Theta_*$};
  \draw[ffarrowblue] (th0)--node[above,ffmath]{$\Delta\Theta_*\in\mathcal K_C$}(th1);
  \node[fflabel,text=FFGrayDark] at (3.53,.31) {Affine fiber on a fixed feature frame};
  \node[ffblue,minimum width=5.10cm,minimum height=1.05cm,text width=4.55cm] (cur) at (11.70,2.72)
    {Current frame $C$\\[-1pt]$C\Delta\Theta_*^{\top}=0$};
  \node[ffgold,minimum width=5.10cm,minimum height=1.05cm,text width=4.55cm] (hist) at (11.70,1.18)
    {Historical frame $\mathsf H$\\[-1pt]$\mathsf H\Delta\Theta_*^{\top}=D$};
  \draw[ffdasharrow] (6.38,2.32)--(cur.west);
  \draw[ffdasharrow] (6.25,1.61)--(hist.west);
  \node[fflabel,text=FFGrayDark] at (11.70,.31) {Rounded write checked against both constraints};
\end{tikzpicture}}
\caption{\textbf{A finite-frame realization of intrinsic actuation.} The ideal head write moves within the affine fiber $\Theta+\mathcal K_C$. The same move gives zero current-frame change and the prescribed historical increment $D$; dashed arrows report these readouts, not additional parameter motions. The plane represents this linear head slice only. The rounded implementation must pass the numerical certificate in Section~\ref{sec:actuator}, rather than inherit exact preservation from the schematic.}
\label{fig:finiteframe}
\end{figure}

\subsection{A prescribed centered margin target}
For $q\ge2$ active classes and historical representative $i$ with label $y_i$, define its minimum margin against all other active classes:
\begin{equation}
 m_i=\min_{j\ne y_i}\bigl[(\mathsf H\Theta^\top)_{i,y_i}-(\mathsf H\Theta^\top)_{i,j}\bigr].
 \label{eq:margin}
\end{equation}
The specified target uses $\mu=1$ and $\eta=2^{-10}$:
\begin{equation}
 \begin{aligned}
 d_i&=\begin{cases}0,&m_i\ge\mu,\\ \mu+\eta-m_i,&m_i<\mu,\end{cases}\\
 D_{i,:}&=d_i\left(e_{y_i}-q^{-1}\mathbf 1\right)^\top,
 \qquad D\in\R^{n_h\times q}.
 \end{aligned}
 \label{eq:target}
\end{equation}
The target is zero-sum across classes, $D\mathbf 1=0$. For a deficient representative, the correct logit increment is $d_i(1-1/q)$ and every competing increment is $-d_i/q$; each correct-minus-competitor difference therefore increases by exactly $d_i$. This is a specified interpolation target, not a validation-loss objective or a claim that a class deserves more replay.

Representatives come from the actual post-admission reservoir: for each represented old class, the item with the smallest unique identifier (UID) is selected, with fixed duplicate and tie-breaking rules. Missing old classes are recorded, not replaced by new external samples. The predicate $D=0$ only describes these selected representatives. A targeted repair is not a class-confined parameter perturbation: the centered target and its minimum-norm realization can affect many head coordinates and unconstrained inputs. The accompanying margin/deficiency signal is a present-frame diagnostic, not a measurement of the full future-response fingerprint.

\subsection{Null-space interpolation and the least-norm write}
Assume $C$ has full row rank and define
\begin{equation}
 P_C=I-C^\top(CC^\top)^{-1}C,\qquad N=\mathsf H P_C.
 \label{eq:projector}
\end{equation}
$P_C$ is the orthogonal projector onto $\ker C$. The residual rows of $N$ are the parts of the historical frame available for modification without changing the current frame. Full row rank of $N$ is a further assumption; full row rank of $C$ alone does not guarantee it. In particular, it requires $n_h\le d-n_c$ and, in this case, is equivalent to full row rank of the stacked matrix $[C^\top\;\mathsf H^\top]^\top$.

Under these assumptions the prescribed ideal map is
\begin{equation}
 \boxed{\Delta\Theta_* =D^\top(NN^\top)^{-1}N.}
 \label{eq:repair}
\end{equation}
This is a full-row-rank instance of minimum-norm linear interpolation \cite{penrose}. The implementation uses the corresponding linear solves, not explicit inverses. It does not substitute a pseudoinverse, ridge term, or an alternative solver when the prescribed numerical path fails.

\begin{proposition}[Exact finite-frame realization]
\label{prop:leastnorm}
If $C$ and $N$ have full row rank, Equation~\eqref{eq:repair} is the unique minimum-Frobenius-norm solution of
\begin{equation}
 C\Delta\Theta^\top=0,\qquad \mathsf H\Delta\Theta^\top=D.
 \label{eq:constraints}
\end{equation}
For the target in Equation~\eqref{eq:target}, all deficient represented margins become $\mu+\eta$, and the logits of already nondeficient representatives are unchanged. The solution also satisfies $\Delta\Theta_*^\top\mathbf1=0$.
\end{proposition}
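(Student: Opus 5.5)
The plan is to reduce the problem to minimum-norm interpolation on a single subspace. Let $Z=\Delta\Theta^\top\in\R^{d\times q}$. The constraint $CZ=0$ says every column of $Z$ lies in $\ker C$, so $Z=P_CZ$. Substituting into the second constraint gives $\mathsf HZ=\mathsf HP_CZ=NZ$. Problem \eqref{eq:constraints} is therefore equivalent to
\[
Z\in\operatorname{range}(P_C)\times\cdots\times\operatorname{range}(P_C),\qquad NZ=D.
\]

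\textbf{Existence and feasibility.} Set $Z_*=N^\top(NN^\top)^{-1}D$, which is the transpose of \eqref{eq:repair}. Then $NZ_*=D$ directly. Because $N^\top=P_C\mathsf H^\top$ and $P_C$ is symmetric and idempotent, $CN^\top=CP_C\mathsf H^\top=0$, so $CZ_*=0$. Also $\mathsf HZ_*=\mathsf HP_CZ_*=NZ_*=D$. Hence $Z_*$ is feasible.

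\textbf{Minimality and uniqueness.} Take any feasible $Z$ and write $Z=Z_*+E$. Then $CE=0$ and $NE=0$. The cross term is
\[
\langle Z_*,E\rangle_F=\operatorname{tr}\!\bigl(D^\top(NN^\top)^{-1}NE\bigr)=0,
\]
so $\|Z\|_F^2=\|Z_*\|_F^2+\|E\|_F^2$. This gives minimality, and uniqueness follows from strict convexity, with equality only when $E=0$. The two facts doing the work are full row rank of $N$ (so that $NN^\top$ is invertible) and the identity $CN^\top=0$. Full row rank of $C$ is used only to define $P_C$.

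\textbf{Margins.} New logits on the historical representatives are $\mathsf H(\Theta+\Delta\Theta_*)^\top=\mathsf H\Theta^\top+D$.
\begin{itemize}
\item For a deficient $i$, row $D_{i,:}=d_i(e_{y_i}-q^{-1}\mathbf1)^\top$ raises each difference $(\cdot)_{y_i}-(\cdot)_j$, $j\ne y_i$, by exactly $d_i$. Each competitor margin shifts by the same amount, so the minimum also shifts by $d_i$, giving $m_i+d_i=\mu+\eta$.
\item For a nondeficient $i$, $d_i=0$, so its logit row is unchanged.
\end{itemize}

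\textbf{Zero-sum columns.} We have $\Delta\Theta_*^\top\mathbf1=N^\top(NN^\top)^{-1}D\mathbf1$. Since $D\mathbf1=\sum_i d_i(e_{y_i}-q^{-1}\mathbf1)^\top\mathbf1=0$ row by row, this vanishes.

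\textbf{Main obstacle.} The only step needing care is checking that $Z_*$ satisfies the original $\mathsf H$ constraint, not just the projected one $NZ=D$. This rests on $P_C Z_*=Z_*$, which follows from $P_C N^\top=N^\top$. Everything else is routine linear algebra.
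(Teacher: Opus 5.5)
Your proof is correct and follows essentially the same route as the paper's argument in Appendix~\ref{app:proof}: you reduce $C\Delta\Theta^\top=0$ to $U=P_CU$, verify $N^\top(NN^\top)^{-1}D$ via $CN^\top=0$ and $\mathsf H N^\top=NN^\top$, and get the Pythagorean split $\norm{U}_F^2=\norm{U_*}_F^2+\norm{E}_F^2$ from $NE=0$. You then read off centering from $D\mathbf 1=0$ and the margin shift from $D_{i,y_i}-D_{i,j}=d_i$; the appeal to strict convexity is redundant, since that orthogonal decomposition already forces $E=0$ for any minimizer.
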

The proof appears in Appendix~\ref{app:proof}. The last equality says that the ideal write does not shift the mean active-class logit at any fixed feature row. This centering is not an all-input preservation guarantee. The minimum-norm statement concerns the prescribed $D$ and the two exact equality constraints; it is not global optimality among all margin-improving, generalization-improving, or budget-feasible interventions.

Rank deficiency does not imply that every possible target is infeasible. With the projector defined above, an exact target is feasible precisely when every column of $D$ belongs to $\operatorname{range}N$. Full row rank is the sufficient condition that makes every target feasible. The implementation requires a certified nonsingular solve; the more general feasibility condition does not alter this acceptance rule.

\subsection{A feasible repair set, not a unique feasible point}
\begin{corollary}[Finite-frame repair set]
\label{cor:repairset}
Under Proposition~\ref{prop:leastnorm}, put $A=[C^\top\;\mathsf H^\top]^\top$ and $r_A=\operatorname{rank}A$. If $r\ge\norm{\Delta\Theta_*}_F$, all writes satisfying Equation~\eqref{eq:constraints} and the ideal budget $\norm{\Delta\Theta}_F\le r$ form
\begin{equation}
 \boxed{\mathcal S_{\rm frame}(r)=
 \left\{\Delta\Theta_*+Z:\ AZ^\top=0,\quad
 \norm Z_F\le\sqrt{r^2-\norm{\Delta\Theta_*}_F^2}\right\}.}
 \label{eq:repairset}
\end{equation}
The underlying affine solution space has dimension $q(d-r_A)$. With the additional centering constraint $Z^\top\mathbf1=0$, its dimension is $(q-1)(d-r_A)$. The bounded set is empty if $r<\norm{\Delta\Theta_*}_F$, and a singleton if equality holds; the stated dimensions apply to its relative interior when the radius is positive.
\end{corollary}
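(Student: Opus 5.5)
The proof has three parts. First, parametrize all solutions of Equation~\eqref{eq:constraints} as $\Delta\Theta_*$ plus the homogeneous solution space. Second, use Frobenius orthogonality of $\Delta\Theta_*$ to that space, which turns the norm budget into a ball condition on $Z$. Third, count dimensions, including the centered version.

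\emph{Step 1 (affine structure).} By Proposition~\ref{prop:leastnorm}, $\Delta\Theta_*$ satisfies both constraints. Any other solution $\Delta\Theta$ differs from it by $Z=\Delta\Theta-\Delta\Theta_*$ with $CZ^\top=0$ and $\mathsf H Z^\top=0$, that is, $AZ^\top=0$. Conversely, every such $Z$ gives a solution. So the solution set is $\Delta\Theta_*+\{Z: AZ^\top=0\}$. Each row of $Z$ must lie in $\ker A\subset\R^d$, which has dimension $d-r_A$. The $q$ rows are independent, so the solution space has dimension $q(d-r_A)$.

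\emph{Step 2 (Pythagoras).} The key fact is that $\Delta\Theta_*=D^\top(NN^\top)^{-1}N$ has every row in the row space of $N=\mathsf H P_C$. That row space lies in the row space of $A$: $P_C$ is the identity minus a map whose rows lie in the row space of $C$, so each row of $\mathsf H P_C$ is a row of $\mathsf H$ minus a combination of rows of $C$. Each row of $Z$ lies in $\ker A$, which is the orthogonal complement of the row space of $A$. Hence $\langle \Delta\Theta_*,Z\rangle_F=\operatorname{tr}(\Delta\Theta_* Z^\top)=0$, and
\[
\norm{\Delta\Theta_*+Z}_F^2=\norm{\Delta\Theta_*}_F^2+\norm Z_F^2 .
\]
The budget $\norm{\Delta\Theta}_F\le r$ is therefore equivalent to $\norm Z_F^2\le r^2-\norm{\Delta\Theta_*}_F^2$, which gives Equation~\eqref{eq:repairset}. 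The degenerate cases follow directly. If $r<\norm{\Delta\Theta_*}_F$, the right-hand side is negative and the set is empty. If $r=\norm{\Delta\Theta_*}_F$, the only admissible choice is $Z=0$, giving a singleton. If the ball radius is positive, the set is a full-dimensional closed ball in the linear space, so its relative interior has the dimension of that space.

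\emph{Step 3 (centering).} The additional constraint $Z^\top\mathbf1=0$ says the $q$ rows of $Z$, each in $\ker A$, sum to zero. The map $Z\mapsto Z^\top\mathbf 1$ from $(\ker A)^q$ onto $\ker A$ is surjective: put any target vector in the first row and zeros elsewhere. Its kernel therefore has dimension $q(d-r_A)-(d-r_A)=(q-1)(d-r_A)$. Since $\Delta\Theta_*^\top\mathbf1=0$ by Proposition~\ref{prop:leastnorm}, the centered solution set is again $\Delta\Theta_*$ plus this subspace. The ball argument of Step 2 applies unchanged, because the orthogonality holds for every $Z$ with rows in $\ker A$.

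The main obstacle is the orthogonality claim in Step 2, namely that the row space of $N$ is contained in the row space of $A$. I would check it directly from the formula $P_C=I-C^\top(CC^\top)^{-1}C$. Alternatively, it follows from uniqueness of the minimum-norm solution in Proposition~\ref{prop:leastnorm}: a minimum-norm solution must be orthogonal to every homogeneous solution, since otherwise perturbing along a homogeneous direction would reduce the norm.
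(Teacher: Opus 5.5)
Your proposal is correct and follows essentially the same route as the paper. Like the paper, you write feasible writes as $\Delta\Theta_*$ plus the homogeneous kernel of $A$, use orthogonality of the least-norm solution to that kernel to get the Pythagorean identity and hence the ball condition, and count $d-r_A$ free coordinates per head row, with one sum constraint per coordinate under centering. Your explicit check that the rows of $\Delta\Theta_*$ lie in the row space of $N=\mathsf H P_C$, and hence in that of $A$, is a more detailed version of the paper's orthogonality step, which the appendix phrases as $\operatorname{range}N^\top\perp\ker N$.
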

\begin{proof}
The difference of two feasible writes lies in the homogeneous kernel of $A$. The least-norm solution is orthogonal to that kernel, so
$\norm{\Delta\Theta_*+Z}_F^2=\norm{\Delta\Theta_*}_F^2+\norm Z_F^2$.
Each of the $q$ head rows has $d-r_A$ free coordinates. Centering imposes one sum constraint for each such coordinate. Substitution gives the result.
\end{proof}
This is a direct linear-algebraic consequence of the least-norm construction, not a new principle of null-space editing. At the stored first common attribution frame, step 351, $q=30$, $d=769$, and $r_A=28$, giving 22,230 free coordinates, or 21,489 under centering. The recomputed ideal norm is approximately $0.06859953<1$; exact rank certificates for the stored dyadic matrices support the dimension calculation (Appendix~\ref{app:outputgeometry}). The implementation chooses the prescribed minimum-norm write, not arbitrary members of this set. Other members need not pass native acceptance checks or improve future outcomes. In particular, a large frame-feasible set is not evidence for a training-reachable favorable basin.

\Needspace{14\baselineskip}
\subsection{The write budget depends on target and geometry}
\begin{proposition}[Geometry-conditioned repair cost]
\label{prop:cost}
Under the assumptions of Proposition~\ref{prop:leastnorm}, with $n_h>0$,
\begin{equation}
 \norm{\Delta\Theta_*}_F^2
 =\operatorname{tr}\!\left(D^\top(NN^\top)^{-1}D\right)
 =\sum_{i=1}^{n_h}\frac{\norm{\ell_i^\top D}_2^2}{\sigma_i(N)^2},
 \label{eq:geocost}
\end{equation}
where $\ell_i$ are orthonormal left singular vectors of $N$. Hence
\begin{equation}
 \frac{\norm{D}_F}{\sigma_{\max}(N)}
 \le \norm{\Delta\Theta_*}_F
 \le \frac{\norm{D}_F}{\sigma_{\min}(N)}.
 \label{eq:costbounds}
\end{equation}
\end{proposition}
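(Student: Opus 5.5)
The plan is a direct computation from the closed form in Equation~\eqref{eq:repair}, followed by a spectral decomposition of the Gram matrix $NN^\top$. First I will compute $\norm{\Delta\Theta_*}_F^2=\operatorname{tr}(\Delta\Theta_*\Delta\Theta_*^\top)$. With $\Delta\Theta_*=D^\top(NN^\top)^{-1}N$ this gives
\[
\Delta\Theta_*\Delta\Theta_*^\top=D^\top(NN^\top)^{-1}NN^\top(NN^\top)^{-1}D=D^\top(NN^\top)^{-1}D .
\]
Here I use only that $NN^\top$ is symmetric and invertible; invertibility is the full-row-rank hypothesis inherited from Proposition~\ref{prop:leastnorm}. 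This yields the first equality in Equation~\eqref{eq:geocost}.

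For the second equality, I will take a thin singular value decomposition $N=L\Sigma R^\top$. Since $N\in\R^{n_h\times d}$ has full row rank $n_h$, the matrix $L=[\ell_1,\ldots,\ell_{n_h}]$ is square orthogonal, and $\Sigma=\operatorname{diag}(\sigma_i(N))$ has all entries strictly positive. Then $NN^\top=L\Sigma^2L^\top$ and $(NN^\top)^{-1}=L\Sigma^{-2}L^\top$, so
\[
\operatorname{tr}\bigl(D^\top L\Sigma^{-2}L^\top D\bigr)=\sum_i \sigma_i^{-2}\norm{\ell_i^\top D}_2^2 .
\]
This follows by writing the trace as a sum over the rows $\ell_i^\top D$ of $L^\top D$.

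For the bounds in Equation~\eqref{eq:costbounds}, I will sandwich each weight $\sigma_i^{-2}$ between $\sigma_{\max}^{-2}$ and $\sigma_{\min}^{-2}$, both of which are positive. Because $L$ is orthogonal, $\sum_i\norm{\ell_i^\top D}_2^2=\norm{L^\top D}_F^2=\norm{D}_F^2$. Taking square roots then gives both inequalities.

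The only delicate point is this last orthogonality identity. It requires $L$ to be a complete orthonormal basis of $\R^{n_h}$ rather than merely a set of orthonormal columns, and that is exactly where full row rank of $N$, together with $n_h>0$ so that the singular values are defined, enters. I will state explicitly that the singular vectors are taken from the full left factor. Apart from that, the argument is routine.
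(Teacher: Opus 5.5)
Your proof is correct and follows essentially the same route as the paper. Both arguments cancel the trace via symmetry of $(NN^\top)^{-1}$, diagonalize (your SVD $N=L\Sigma R^\top$ is equivalent to the paper's orthonormal eigendecomposition of $G_N=NN^\top$), and then bound the reciprocal eigenvalues, with $\sum_i\norm{\ell_i^\top D}_2^2=\norm{D}_F^2$ coming from orthogonality of the square factor $L$.
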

This identity, proved in Appendix~\ref{app:costproof}, separates the requested margin correction from its geometric accessibility. Two frames with the same target magnitude can require different write norms. A large norm can reflect poor separation of historical features from the protected current span, not just large deficiencies.

In exact arithmetic, if $\norm{\Delta\Theta_*}_F>r$, no write of norm at most $r$ satisfies this particular exact interpolation target. The implemented rejection is more conservative: the numerical routine demands certified upper bounds for both ideal-map and proposal norms within the unit cap. Failure of this test does not by itself give a certified lower bound above the cap, prove infeasibility of every approximate repair, or establish the absence of any useful action. A write-cap rejection should be described as failure to certify the prescribed local map within the specified budget.

\subsection{From the ideal map to a certified finite transaction}
The implemented write is a finite-precision transaction. Stored inference features and head values are FP32; the proposal computation uses their exact promotion to CPU FP64. Interval enclosures, solve checks, and exact-dyadic checks on the stored frame are used before native staged/live head evaluations. These levels should not be conflated: exact real arithmetic on stored FP32 values is not an exact statement about every GPU matrix multiplication or every network input.

For the stored-frame part of an accepted transaction, let $\widehat{\Delta\Theta}$ denote the actual head difference after finite-precision rounding. The certificate includes
\begin{equation}
 \begin{aligned}
 \norm{C\widehat{\Delta\Theta}^{\top}}_{\max}&\le\varepsilon_C,\\
 \norm{\mathsf H\widehat{\Delta\Theta}^{\top}-D}_{\max}&\le\varepsilon_H,\\
 \norm{\widehat{\Delta\Theta}}_F&\le1,
 \end{aligned}
 \label{eq:rounded}
\end{equation}
with $\varepsilon_C=\varepsilon_H=2^{-12}$ and $\norm{\cdot}_{\max}$ the largest absolute matrix entry. It separately checks unchanged current-frame argmax and old represented margins at least $\mu$. Native staged and live logits are then checked for the same current prediction preservation, current logit tolerance, old-margin threshold, and per-item/mean current CE tolerance $2^{-11}$. Current argmax preservation is not inferred from a small residual alone.

\begin{corollary}[What the interpolation tolerance controls]
\label{cor:roundedmargin}
For the exact affine evaluations of stored values, Equation~\eqref{eq:rounded} implies
\begin{equation}
 \bigl|\widehat m_i-(m_i+d_i)\bigr|\le2\varepsilon_H.
 \label{eq:marginerror}
\end{equation}
A deficient representative therefore satisfies $\widehat m_i\ge1+2^{-11}>1$ under the specified $\eta$ and $\varepsilon_H$. For a previously nondeficient representative the residual bound alone only gives $\widehat m_i\ge m_i-2\varepsilon_H$; the explicit post-margin check remains necessary at the threshold.
\end{corollary}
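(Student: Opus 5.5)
The argument is a direct entrywise perturbation of the minimum in Equation~\eqref{eq:margin}. Set $L=\mathsf H\Theta^\top$, $\widehat L=\mathsf H(\Theta+\widehat{\Delta\Theta})^\top$, and $E=\mathsf H\widehat{\Delta\Theta}^\top-D$. Under exact affine evaluation of the stored values, $\widehat L=L+D+E$ with $\norm{E}_{\max}\le\varepsilon_H$, by the second line of Equation~\eqref{eq:rounded}. For each competitor $j\ne y_i$, the pairwise margin becomes
\[
\widehat L_{i,y_i}-\widehat L_{i,j}=(L_{i,y_i}-L_{i,j})+(D_{i,y_i}-D_{i,j})+(E_{i,y_i}-E_{i,j}).
\]
By the structure of the centered target in Equation~\eqref{eq:target}, $D_{i,y_i}-D_{i,j}=d_i(1-1/q)+d_i/q=d_i$ for every $j\ne y_i$, and this holds in both the deficient and nondeficient cases. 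The last term is bounded in absolute value by $2\varepsilon_H$.

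Next I would pass from pairwise bounds to the minimum. The minimum over $j$ is $1$-Lipschitz in the sup norm: if $a_j,b_j$ satisfy $|a_j-b_j|\le\delta$ for all $j$, then $|\min_j a_j-\min_j b_j|\le\delta$. Apply this with $a_j=\widehat L_{i,y_i}-\widehat L_{i,j}$, $b_j=L_{i,y_i}-L_{i,j}+d_i$, and $\delta=2\varepsilon_H$. Since $d_i$ is common to all $j$, $\min_j b_j=m_i+d_i$, which gives Equation~\eqref{eq:marginerror}. The only point needing care is that the shift $d_i$ is uniform across competitors; otherwise the minimizing competitor could change and the bound would fail. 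The uniform shift is exactly what centering provides.

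For the consequences, take a deficient representative. There $m_i+d_i=\mu+\eta=1+2^{-10}$, so $\widehat m_i\ge1+2^{-10}-2\cdot2^{-12}=1+2^{-11}>1$. For a nondeficient representative, $d_i=0$ and the bound gives only $\widehat m_i\ge m_i-2\varepsilon_H$. If $m_i$ lies in $[\mu,\mu+2\varepsilon_H)$, this lower bound can fall below $\mu$. So the residual bound alone cannot certify the threshold, which is why the explicit post-margin check is needed. I expect no real obstacle. The step to state carefully is the scope qualifier that ``exact affine evaluations of stored values'' means real arithmetic on the FP32 stored entries and the rounded $\widehat{\Delta\Theta}$, not native GPU evaluation.
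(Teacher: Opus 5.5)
Your proposal is correct and follows essentially the same route as the paper's proof in Appendix~\ref{app:tolerance}: write each pairwise post-margin as the original margin plus the competitor-independent shift $D_{i,y_i}-D_{i,j}=d_i$ plus $E_{i,y_i}-E_{i,j}$, bound that last term by $2\varepsilon_H$, pass to the minimum, and use $\eta=4\varepsilon_H$ for the deficient case. Your explicit $1$-Lipschitz argument for the minimum spells out the paper's brief remark that taking minima preserves the pairwise bound, and your threshold example for nondeficient representatives matches the paper's reason for keeping the explicit post-margin check.
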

A companion real-arithmetic bound is $|\Delta\operatorname{CE}_i|\le2\varepsilon_C$ for the same stored logits and labels. It explains the relation between the two tolerance scales but does not replace the implementation's native CE check. Both statements are derived in Appendix~\ref{app:tolerance}.

An accepted transaction changes active head weights and biases only. The encoder parameters, optimizer moments and counters, gradient buffers, reservoir, RNG state, and module modes are protected during the transaction. A recoverable rejection restores the original learning state before returning. A fatal device failure or unverified restoration is not reported as a successful identity return. Numerical checks authenticate the current transaction; they do not certify future accuracy or the usefulness of a coordinator.

\Needspace{8\baselineskip}
\subsection{Same-frame saturation is not future invariance}
\begin{corollary}[Saturation on an unchanged frame]
\label{cor:saturation}
After the ideal repair in Proposition~\ref{prop:leastnorm}, recomputing Equation~\eqref{eq:target} on the identical historical frame and updated head yields $D=0$. The same conclusion holds for an accepted rounded write whose stored-frame represented margins have all passed the explicit $m_i\ge\mu$ postcheck.
\end{corollary}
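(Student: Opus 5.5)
The statement follows directly from the definition of the target in Equation~\eqref{eq:target}: $d_i=0$ exactly when $m_i\ge\mu$. So $D=0$ on a frame iff every represented margin on that frame is at least $\mu$. The plan is to compute the post-write margins on the \emph{identical} frame $\mathsf H$, with the same labels and the same active class set $q$, and check this threshold row by row. Throughout, recomputation means re-evaluating Equation~\eqref{eq:margin} with $\Theta$ replaced by $\Theta+\Delta\Theta$ while holding $\mathsf H$ and $y_i$ fixed. This is the point on which the corollary is scoped: no claim is made about a reacquired frame.

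For the ideal repair, I will use $\mathsf H\Delta\Theta_*^\top=D$ from Proposition~\ref{prop:leastnorm}. Row $i$ of the new historical logits is the old row plus $d_i(e_{y_i}-q^{-1}\mathbf 1)^\top$. Every correct-minus-competitor difference therefore increases by exactly $d_i$, so the new margin is $m_i+d_i$, since the minimum over $j\ne y_i$ shifts uniformly. There are two cases:
\begin{itemize}
\item If $m_i\ge\mu$, then $d_i=0$ and the new margin is $m_i\ge\mu$.
\item If $m_i<\mu$, the new margin is $\mu+\eta>\mu$.
\end{itemize}
Hence every recomputed $d_i$ vanishes, and so does each row $D_{i,:}=d_i(\cdot)$. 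This is exactly the margin statement already recorded in Proposition~\ref{prop:leastnorm}, so this half is immediate.

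For the rounded write, I will not try to deduce the threshold from the residual bound. Corollary~\ref{cor:roundedmargin} shows that for previously nondeficient rows the residual only gives $\widehat m_i\ge m_i-2\varepsilon_H$, which can fall below $\mu$ at the threshold. Instead, the hypothesis that all stored-frame represented margins passed the explicit postcheck $\widehat m_i\ge\mu$ gives the threshold directly. The recomputed $d_i$ then uses $\widehat m_i$ in the case split and is zero for every $i$. Corollary~\ref{cor:roundedmargin} is only supporting context: deficient rows already satisfy $\widehat m_i\ge 1+2^{-11}$, so the postcheck binds only at the nondeficient boundary.

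The only real obstacle is bookkeeping rather than mathematics. I must make sure the recomputation uses the same representatives, labels and $q$, and the same exact affine evaluation of stored values that the postcheck certified. Otherwise the conclusion could fail, because margins evaluated under a different arithmetic or a refreshed frame need not clear $\mu$. I will state this identification explicitly, and note that the result says nothing about future invariance after further training.
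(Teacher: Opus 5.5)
Your proposal is correct and follows essentially the same argument as the paper: the ideal write raises every correct-minus-competitor difference by exactly $d_i$, so every represented margin on the unchanged frame is either $m_i\ge\mu$ or $\mu+\eta$, and the recomputed target vanishes; for the rounded write, the explicit all-margin postcheck supplies $\widehat m_i\ge\mu$ directly. Your decisions to hold $\mathsf H$, the labels, and $q$ fixed and not to rely on the $2\varepsilon_H$ residual bound for nondeficient rows match the paper's own scoping remarks.
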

This corollary concerns one immutable frame with no intervening ordinary update, head growth, or representative change. It does not imply identity at the next Fiber consultation. Missing classes and unconstrained inputs are outside the statement.

There is also no general implication from protected current logits to a protected future learning update. For a differentiable same-frame loss $\ell(C\Theta^\top)$, let $G$ be its derivative with respect to logits. An exact null-space write leaves logits and $G$ unchanged, but the feature derivative changes by
\begin{equation}
 \nabla_C\ell\big|_{\Theta+\Delta\Theta}-\nabla_C\ell\big|_{\Theta}
 =G\Delta\Theta,
 \label{eq:featuregradient}
\end{equation}
which need not vanish. If those features depend on encoder parameters, backpropagation can consequently differ. This is a fixed-frame chain-rule observation, not a claim that the measured training-mode forward is identical; the actual action frame is acquired in inference mode. Appendix~\ref{app:gradient} gives the precise distinction. Thus a finite observation-preserving action can be relevant to subsequent learning without a promised direction or sign of future utility.

\subsection{Scheduled Fiber actuation}
An abstract scheduled actuator is specified by an eligibility set $\mathcal E$ and the target/acceptance logic above. We use a consultation interval of $K=32$ updates, with
\begin{equation}
 \mathcal E_{32}=\{191+32k:k=0,\ldots,19\}.
 \label{eq:eligibility}
\end{equation}
The first eligible point is 191: old tasks exist from step 160, and the current-frame construction uses the eight-example epoch tail. The interval is fixed across all comparisons, rather than optimized separately for each external training rule.

At an eligible point the mechanism reacquires the frame, constructs the target, and returns one of three principal outcomes: saturated identity when $D=0$; a nonzero commit when the prescribed repair passes the complete certificate; or rejected identity when it cannot be certified. Empty historical support is separately reported as identity. Outside $\mathcal E_{32}$ there is no consultation. In compact form, on the learning-state coordinates,
\begin{equation}
 x_t^+=
 \begin{cases}
  \Phi_{F_t}(x_t),&t\in\mathcal E_{32},\ D_t\ne0,\ \mathscr L_t(x_t,F_t)=1,\\
  x_t,&\text{otherwise, for a normally returned decision.}
 \end{cases}
 \label{eq:scheduled}
\end{equation}
A regular offer uses at most one extra encoder call over the combined frame and no extra backward pass. Committing and abstaining can both incur sensing/certification work. The code and numerical parameters are retained across replay and distillation. ``Sparse'' describes the opportunity schedule, not a guarantee that only a few eligible points commit in every environment. The observed commit times, recurring demand, and rejection patterns are outcomes, not scripted triggers or universal regime labels.

\FloatBarrier

\FloatBarrier
\section{Experimental design and identification scope}
\label{sec:protocol}
\subsection{Common learning task and readouts}
All studies use the same deterministically selected 50-class subset of CLINC \cite{clinc}, partitioned into five tasks of ten classes. This is not a full 150-class benchmark. The subset has 5,000 training and 1,000 validation examples. Each task is trained for five epochs with current batch size 32 and an eight-item epoch tail, giving 160 updates per task and 800 in total. Task indices are zero based. BERT-base-cased with a growing linear head is the default \cite{bert}; the backbone extension uses RoBERTa-base and its tokenizer \cite{roberta}. Appendix~\ref{app:protocol} gives implementation settings; exact source revisions and data bindings were documented in internal verification records.

Evaluation concatenates all active heads and predicts over all seen classes without a task oracle. Correct count $J$ is the primary utility; $-\mathrm{CE}$ is a distinct secondary utility. Validation task ends are steps 159, 319, 479, 639, and 799, with 200, 400, 600, 800, and 1,000 items. The independent five-root policy study additionally uses all 1,500 test examples from the same 50 classes, 30 per class, once at termination. This test split is separate from reused validation data and is not consulted for policy execution, calibration, stopping, or root selection. The same terminal test items are used across all five roots and three arms. A count from a growing panel is not compared across horizons as though its population were fixed.

The reservoir has 150 slots and is updated after ordinary training. Replay is drawn before admission and contributes 16 examples whenever the reservoir is nonempty: 799 events and 12,784 historical presentations in the full schedule. Replay and current examples are concatenated under a single mean-CE update. Replay-off removes the replay component, not current learning; it changes both batch composition and normalization. Matching keyed randomization does not imply identical dropout tensors after batch size changes. Thus the estimand concerns the implemented continuation, not a separately isolated additive replay gradient.

\subsection{Experimental units and complementary comparisons}
An isolated four-cell comparison restores one common pre-action parent and proposal, then disables every subsequent Fiber write. Within a cell quartet, endpoint items and prediction conventions agree. A whole-policy comparison instead starts at a common root and lets future states, writes, and allocations diverge. Within a backbone, a root changes keyed head initialization, data order, dropout, memory admission, and replay sampling, but not the pretrained encoder or class partition. The training root, rather than an evaluation item or a repeated horizon, is the unit of independent-root variation.

Table~\ref{tab:scope} organizes the experiments by the question each identifies. Continuation and backbone extensions retain the finite-frame construction while changing the learning context. The two principal optimizer comparisons ask different questions: whether interaction occurs under optimizer-native trajectories, and whether it also occurs under a controlled update-scale calibration. The allocation comparisons then test the added value of a fixed replay coordinator and the value of exact Fiber-label alignment relative to mask-matched random content; the five-root policy comparison retains Fiber in every arm.

\begin{table}[!htbp]\centering\small
\caption{\textbf{Scientific questions and experimental units.} Every nonactivation remains in its prescribed denominator. Multiple anchors and horizons do not create additional independent roots. All comparisons reuse the class partition; only the five-root policy study adds the previously unused test split.}
\label{tab:scope}
\begin{tabularx}{\linewidth}{L{0.21\linewidth}L{0.29\linewidth}Y}\toprule
Comparison & Parent/root design & Identified question and readout scope\\\midrule
Replay interaction & One development root; matched full- or 75\%-replay parents & Continuation-conditioned effects; validation with endpoint-dependent panels\\
Distillation interaction & Development root; first three legal commits on the distillation prefix & Distillation-off/on four cells; $H=32,128$\\
Backbone extension & Three RoBERTa roots; first legal commit by 671 & Replay interaction under a second backbone; $H=32,128$\\
Native-optimizer comparison & Three new roots, paired AdamW/SGDW; two task phases & Phase activation and conditional replay interaction; $H=32,128$\\
Update-scale sensitivity & Three distinct roots; fixed anchors and independent rate calibration & Interaction under sampled update-scale matching; associated diagnostics reuse these roots\\
Allocation attribution & Development root; three control trajectories and archived comparators & Complete-policy attribution on the reused 1,000-item validation endpoint\\
Fresh-root policy comparison & Five new roots, each with three full policies containing Fiber & Additional allocation value and content specificity on 1,500 test items; CE and validation secondary\\\bottomrule
\end{tabularx}
\end{table}

\subsection{Prospective decisions, nonactivation, and interpretation}
The initial development policies were exploratory. Follow-up controls and extensions were frozen before their own reported outcomes, after development evidence where applicable. ``Registered'' refers to these archived protocols, not to an independently timestamped public registry. Calibration roots are distinct from measurement roots, and calibration does not use Fiber-interaction outcomes. Root numbers are local to a study. The update-scale sensitivity study and its associated optimizer diagnostics reuse one root panel; the native-optimizer study uses a separate panel. Equal root numbers across those studies do not imply matched realizations. Within the native-optimizer study, the AdamW and SGDW conditions are explicitly paired by root construction and exogenous inputs.

Anchor selection uses only the prescribed pre-action rule. If no legal nonzero proposal occurs in the specified domain, the outcome is \emph{nonactivation}, not an activated zero interaction. No replacement root or later favorable anchor is selected by endpoint value. The native-optimizer study preserves an unactuated full-replay scout: each offer is evaluated on a reloaded clone, so measuring the earlier phase cannot alter the later parent. Activation and conditional interaction are reported separately.

For the five-root policy study, each primary contrast is called directionally positive only if at least four of five roots are positive and both its mean and median are positive; the negative rule is symmetric. Otherwise it is mixed or inconclusive. These are the frozen descriptive criteria, not population-level significance tests. CE remains secondary even when its signs are more consistent. Detailed per-root results, nonactivations, and workload are reported in the corresponding appendices.

\section{Controlled identification across learning contexts}
\label{sec:causal}
\subsection{One action, two external continuations}
The certificate in Section~\ref{sec:actuator} constrains a present write; Equations~\eqref{eq:value} and~\eqref{eq:interaction} concern its future effect. Replay supplies the first controlled realization of the extrinsic factor $A$, evaluated in a same-parent four-cell experiment. A single pre-action parent is taken from the full-replay trajectory at step 351. Its prescribed Fiber action repairs the label-11 representative. Execute and identity branches each undergo updates 352--383 with future replay either on or off. All later Fiber writes are disabled, and the four endpoints use the same 600-item, 30-class evaluation convention.

Figure~\ref{fig:fourcell} shows the four outcomes. Without future replay, the action contributes $500-495=5$ correct predictions; with full future replay it contributes $578-578=0$. Hence
\begin{equation}
 V_F^0=5,\qquad V_F^1=0,\qquad I_{F,R}=-5.
 \label{eq:h32interaction}
\end{equation}
Equivalently, replay contributes $83$ correct predictions after identity and $78$ after execute. These are the two readings of the same contrast in Proposition~\ref{prop:bilateral}, not independent replications. The same certified write therefore has a continuation-dependent correct-count value at the matched parent. This readout-specific non-additivity does not assign replay a generally harmful effect or either channel a context-free value.

\begin{figure}[!htbp]
\centering
\resizebox{0.96\linewidth}{!}{\begin{tikzpicture}[x=1cm,y=1cm]
  \path[use as bounding box] (0,0) rectangle (14.45,6.35);
  \node[ffpanel] at (0,6.10) {MATCHED FOUR-CELL EXPERIMENT: 32 UPDATES, 600 ITEMS};
  \node[fflabel,font=\sffamily\footnotesize\bfseries] at (5.00,5.39) {Replay off};
  \node[fflabel,font=\sffamily\footnotesize\bfseries] at (10.65,5.39) {Replay on};
  \node[fflabel,anchor=east] at (2.28,4.40) {Identity};
  \node[fflabel,anchor=east] at (2.28,2.43) {Execute};
  \node[ffneutral,minimum width=3.48cm,minimum height=1.04cm] (k0) at (5,4.40)
    {$Y_{00}$\quad {\large\bfseries 495}};
  \node[ffgold,minimum width=3.48cm,minimum height=1.04cm] (kr) at (10.65,4.40)
    {$Y_{01}$\quad {\large\bfseries 578}};
  \node[ffblue,minimum width=3.48cm,minimum height=1.04cm] (e0) at (5,2.43)
    {$Y_{10}$\quad {\large\bfseries 500}};
  \node[ffgold,minimum width=3.48cm,minimum height=1.04cm] (er) at (10.65,2.43)
    {$Y_{11}$\quad {\large\bfseries 578}};
  \draw[ffdashline,draw=FFBlue] (k0.south)--node[right,ffmath]{$V_F^0=\mathbf{+5}$}(e0.north);
  \draw[ffdashline] (kr.south)--node[right,ffmath]{$V_F^1=\mathbf{0}$}(er.north);
  \draw[ffdashline] (k0.east)--node[above,fflabel]{$+83$}(kr.west);
  \draw[ffdashline] (e0.east)--node[above,fflabel]{$+78$}(er.west);
  \draw[ffrule] (0,1.38)--(14.45,1.38);
  \node[ffmath] at (7.25,.89) {$I_{F,R}=V_F^1-V_F^0=0-5=\mathbf{-5}$};
  \node[fflabel,text=FFGrayDark] at (7.25,.27) {Replay removes the prediction advantage; a margin difference remains};
\end{tikzpicture}}
\caption{\textbf{External continuation changes the value of an identical Fiber action.} All four branches share the step-351 parent, proposal specification, 32-update horizon, and 600-item endpoint panel. Each cell is one matched branch outcome, not a seed average. Connectors denote scalar comparisons, not transitions between branch states. Vertical differences are Fiber values, with $I_{F,R}=-5$; the two $578$ counts do not imply equal logits or margins.}
\label{fig:fourcell}
\end{figure}

The four-cell result establishes continuation-dependent action value for a certified write; the full trajectory separately tests aggregate performance under repeated legal commits. The uncoordinated full-replay trajectory with Fiber makes three legal commits yet ties its historical no-Fiber reference at $950$ correct predictions. Legal actuation is therefore not, by itself, an improvement guarantee. A separate 75\%-retained nested path yields proposal values $0,-3,-1,-3$ whose exact telescoping sum is $-7$ (Appendix~\ref{app:support}); those values belong to that path and are not transferable to the full-replay parent.

\subsection{Targeted benefit and decision-level saturation}
The item-level comparison localizes all five no-replay correctness gains to label 11, the repaired class, with no correctness losses. In contrast, replay after identity increases the old-task count by $85$ and decreases the current-task count by $2$. Table~\ref{tab:taskeffects} decomposes these two conditional effects. It supports a targeted observed Fiber benefit and a broader observed replay benefit on this panel; it does not establish disjoint parameter support or a universal division of labor.

\begin{table}[!htbp]\centering
\caption{\textbf{Conditional effects by task at the 32-update endpoint.} Both contrasts use $Y_{00}$ as reference. Each task has 200 validation items. These are two edges of the same four-cell experiment, not independent experiments; all five Fiber gains are in label 11.}
\label{tab:taskeffects}
\begin{tabular}{lrr}\toprule
Evaluation task & Fiber without future replay & Replay without anchor Fiber\\
 & $Y_{10}-Y_{00}$ & $Y_{01}-Y_{00}$\\\midrule
Task 0 (old)&0&$+47$\\
Task 1 (old)&$+5$&$+38$\\
Task 2 (current)&0&$-2$\\\midrule
Total&$+5$&$+83$\\\bottomrule
\end{tabular}
\end{table}

\Needspace{5\baselineskip}
With replay, execute and identity agree on all 600 predicted labels, but their margins and CE differ. In particular, Table~\ref{tab:h32margin} shows that the repaired representative retains a nonzero margin difference. Identical predictions therefore indicate decision-level saturation at this endpoint, not disappearance of the internal perturbation.

\begin{table}[!htbp]\centering
\caption{Label-11 endpoint margin after 32 updates. The diagnostic is performed on an endpoint clone; it is not an additional training action.}
\label{tab:h32margin}
\begin{tabular}{lrrrc}\toprule
Future replay & Identity & Execute & Fiber increment & Would either commit?\\\midrule
Off & $-0.7705$ & $0.4308$ & $1.2013$ & Yes\\
On & $7.1279$ & $7.6531$ & $0.5252$ & No\\\bottomrule
\end{tabular}
\end{table}

The margin interaction is
\begin{equation}
 I_{m_{11}}=(7.6531-7.1279)-(0.4308-(-0.7705))\approx-0.6761.
\end{equation}
Thus the structural contrast is negative rather than a positive repair-stabilization effect. Replay improves the endpoint substantially even without the anchor repair and reduces its marginal contribution here. Neither a larger representative margin nor a successful certificate establishes positive aggregate future utility.

\subsection{Continuation effects at longer horizons}
The longer-horizon comparisons use parents from a \emph{75\%-retained} replay trajectory with the Fiber actuator. Replay-on follows the retained schedule, not full replay. Table~\ref{tab:interaction} reports the conditional values at the specified endpoints. Each row is a matched same-parent comparison; the contrasts were checked against stored numerical readouts offline, without rerunning training.

\begin{table}[!htbp]\centering
\caption{\textbf{Continuation-dependent effects at longer horizons.} Each row compares the same parent and proposal under two future replay conditions. $H$ counts subsequent ordinary updates. Evaluation populations and active classes grow across task ends.}
\label{tab:interaction}
\begin{tabular}{rrrcrrr}\toprule
Anchor & $H$ & Eval. step & Items/classes & $V_F^R$ & $V_F^0$ & $I_{F,R}$\\\midrule
511 &128&639&800/40&0&$-5$&$+5$\\
511 &288&799&1,000/50&$-3$&$-1$&$-2$\\
351 &128&479&600/30&$-1$&$+5$&$-6$\\
351 &288&639&800/40&$+1$&$+1$&0\\
351 &448&799&1,000/50&0&0&0\\\bottomrule
\end{tabular}
\end{table}

At anchor 351, $H=128$, changing replay changes the action-value sign. At anchor 511 the interaction sign changes across the evaluated endpoints. The reported object is $I_{F,R}(x,H;\omega,Q_H)$: the evidence does not isolate elapsed time from the changing readout. Nor can these values be joined to the full-replay 32-update experiment as a single ``step-351'' response curve. Parent history and exact proposal, not step number alone, identify the intervention. The 32-update comparison establishes nonzero interaction separately from this longer-horizon family.

\FloatBarrier

\subsection{Output distillation supplies a second continuation family}
\label{sec:distillationinteraction}
A separate four-cell experiment tests the interaction under an output-distillation loss. On the frozen LwF-plus-Fiber development trajectory, the first three legal nonzero commits occur at steps 191, 223, and 255. Each exact parent is independently forked into identity/execute and distillation-off/on branches. Replay examples are absent from the training loss in all four branches; current-task CE remains active, and subsequent Fiber writes are disabled. The teacher-refresh rule is common across branches, but teachers obtained from branch-local students may diverge at later task boundaries.

The LwF-style loss uses temperature $T=2$ and weight $\lambda=1$:
\begin{equation}
 \mathcal L=\operatorname{CE}(z_s,y)
 +\lambda T^2\sum_{h<k}\operatorname{KL}\!\left(
 \operatorname{softmax}(z_{\tau,h}/T)\,\Vert\,
 \operatorname{softmax}(z_{s,h}/T)\right).
 \label{eq:lwf}
\end{equation}
Here the CE uses all active logits and each KL is normalized within a previous ten-class head. At a task boundary, before head growth, the student is copied to a frozen teacher. Distillation-off omits the KL term but preserves the teacher state machine, head growth, and reservoir admission. Fiber still accesses historical representatives; this is not a memory-free method.

\begin{table}[!htbp]\centering
\caption{\textbf{Interaction beyond replay and the original backbone.} Every row is one activated parent, not an independent replication at each horizon. $n_{32}/n_{128}$ records the evaluation sizes. The distillation comparison uses one development root; the backbone comparison uses three RoBERTa roots, numbered locally. Complete branch counts and CE contrasts appear in Appendix~\ref{app:extensions}.}
\label{tab:continuation_backbone}
\begin{tabular}{llrrrr}\toprule
Continuation/backbone & Root & Anchor & $n_{32}/n_{128}$ & $I_J(32)$ & $I_J(128)$\\\midrule
Distillation / BERT & Development &191&400/400&$+3$&$+1$\\
Distillation / BERT & Development &223&400/600&$+2$&$-4$\\
Distillation / BERT & Development &255&400/600&$+5$&$-7$\\\midrule
Replay / RoBERTa & 1&511&800/800&$+2$&$+5$\\
Replay / RoBERTa & 2&511&800/800&$+1$&$+3$\\
Replay / RoBERTa & 3&575&800/1,000&$-3$&$+2$\\\bottomrule
\end{tabular}
\end{table}

All six distillation contrasts are nonzero on correct count (Table~\ref{tab:continuation_backbone}). This identifies continuation dependence under a second external loss, within one shared prefix history. Sign reversals at anchors 223 and 255 also change the active-class panel; they are not pure fixed-readout temporal reversals. Interaction signs can differ between correct count and $-\mathrm{CE}$, as the full table records. The complete-policy distillation comparison, $414\to433$, provides a separate application witness with task-level losses (Appendix~\ref{sec:lwf}); it is not one of these four-cell estimates.

\subsection{A second backbone}
\label{sec:backbone}
With RoBERTa-base and the unchanged target, cap, certificate, and consultation cadence, three new roots use the first legal nonzero Fiber event at or before step 671. The AdamW recipe and replay actuator are held fixed; tokenization and pretrained representation are backbone-specific. Roots~1 and~2 activate at 511, and Root~3 at 575. The corresponding correct-count interactions are $(+2,+1,-3)$ at 32 updates and $(+5,+3,+2)$ at 128 updates.

These are six nonzero contrasts across three roots, not evidence of a model-invariant sign or magnitude. At Root~1, $H=128$, for example, Fiber has value $-5$ without replay and $0$ with replay: the positive interaction $+5$ does not mean Fiber improves on the replay-only branch. Root~3 also uses a larger panel at the longer horizon. The supported claim is occurrence under this second backbone, not a controlled causal comparison of backbone architectures.

\subsection{Optimizer-native, phase-matched deployment}
\label{sec:optimizer}
An optimizer helps generate intrinsic learning dynamics. A cross-optimizer test can therefore hold the \emph{Fiber construction rule} fixed without requiring equality of realized states, directions, or absolute activation times. In the optimizer-native, phase-matched study, AdamW and momentum SGDW share each of three new root identifiers and the associated exogenous data, initialization, admission, and sampling rules. SGDW uses momentum $0.9$ and decoupled multiplicative decay, without inserting the decay term into its momentum buffer. Its learning rates are selected only by Task-0 validation performance after ordinary training from four frozen encoder/head pairs; $(0.003,0.3)$ is selected, with $199/200$ correct on the calibration validation task. This design does not match root-mean-square (RMS) update magnitudes to AdamW.

Two finite opportunity sets specify the phases:
\begin{equation}
 \begin{split}
 \mathcal E_2&=\{351,383,415,447,479\},\\
 \mathcal E_3&=\{511,543,575,607,639\}.
 \end{split}
 \label{eq:phase_sets}
\end{equation}
For optimizer $o$, root $r$, and phase $p$, select the first $t\in\mathcal E_p$ at which the unchanged rule produces a certified nonzero proposal on the unactuated scout. If none exists, report phase nonactivation. A selected parent is saved before the measurement write; all four branches reload it. The scout never retains a probe write and no later Fiber write occurs within a branch.

All six optimizer--root pairs are saturated at every Task-2 opportunity and therefore nonactivate in that phase. All six activate at the first Task-3 opportunity, 511. The rule \emph{permits} different activation times; the observed times happen to agree. Neither equality of times nor a saturated phase establishes equality of learning states or absence of useful interventions beyond the specified frame and target.

\begin{table}[!htbp]\centering
\caption{\textbf{Optimizer-native replay interaction at the activated Task-3 parents.} All rows use anchor 511 and the same 800-item, 40-class panel at both horizons. Task 2 has six phase nonactivations, excluded from this conditional-interaction table but not from the study denominator. Root numbers are local to this study; the two optimizer conditions are paired within each root. $I_{-\mathrm{CE}}$ uses larger-is-better negative CE.}
\label{tab:native_optimizer}
\begin{tabular}{llrrrr}\toprule
Root & Optimizer & $I_J(32)$ & $I_J(128)$ & $I_{-\mathrm{CE}}(32)$ & $I_{-\mathrm{CE}}(128)$\\\midrule
1 & AdamW &$-2$&$-6$&$-0.024153$&$-0.048234$\\
1 & SGDW &0&$-1$&$-0.105356$&$-0.118092$\\
2 & AdamW &$-6$&$-8$&$-0.017977$&$-0.043018$\\
2 & SGDW &$-2$&$-2$&$-0.024880$&$-0.029122$\\
3 & AdamW &0&0&$-0.009414$&$-0.009821$\\
3 & SGDW &0&$-3$&$-0.054441$&$-0.067756$\\\bottomrule
\end{tabular}
\end{table}

At $H=128$, all three activated SGDW roots have nonzero correct-count interaction, $(-1,-2,-3)$; two of three AdamW roots do, $(-6,-8,0)$. All twelve activated optimizer--root--horizon contrasts are negative on $-\mathrm{CE}$. The matched optimizer-native branches thus show interaction under SGDW trajectories as well as AdamW on this specified panel.

A zero count interaction does not establish equality of continuous readouts. The twelve contrasts are nested within three training roots, with optimizer conditions paired within each root and horizons treated as repeated readouts; no population-level significance test is inferred.

Negative interaction describes subadditivity on the specified utility scale. For AdamW Root~1 at $H=128$,
\begin{equation}
 (Y_{00},Y_{01},Y_{10},Y_{11})=(663,753,667,751),
 \qquad I_J=-6.
 \label{eq:native_optimizer_example}
\end{equation}
Fiber adds four correct predictions without replay and loses two with replay. Conversely, SGDW Root~1 at $H=32$ has $V_F^0=V_F^1=6$ and hence $I_J=0$, despite positive conditional Fiber effects. Interaction, benefit, and state change are distinct predicates.

\subsection{Fixed-anchor update-scale sensitivity}
\label{sec:scale_sensitivity}
The update-scale-matched sensitivity study asks whether interaction is also observed under a controlled scale criterion. It keeps anchors 351 and 511 and calibrates encoder and head learning rates independently to selected Task-0 update-RMS probes. The frozen channel-wise matching gate passes at rates $(0.01,0.02)$, with aggregate multiplicative mismatch factors $1.436$ and $1.198$. Four of six root--anchor cells activate. Ordered by Root~1 at 351 and 511, then Root~2 at 351 and 511, their count interactions are $(0,-4,0,0)$ at 32 updates and $(-1,+3,+1,0)$ at 128 updates. Root~3 is saturated at both registered anchors, not a zero-interaction observation. For Root~1 at anchor 511, the sign changes on the same 800-item panel, providing a fixed-readout horizon-dependent contrast.

The scale criterion matches measured update magnitudes, not full vector fields, moments, or trajectories. This sensitivity comparison and the native-optimizer comparison use different root panels and identify different conditions for the occurrence of interaction. Appendix~\ref{app:optimizer_diagnostics} also reports native-event and linked-rate fixed-anchor diagnostics on the sensitivity panel, including their nonactivations. Those diagnostics use distinct recipes on shared roots; differences between them do not isolate a calibration mechanism or supply independent replications. Together, the distinct native phase-matched and scale-matched designs establish scoped occurrences of interaction under SGDW.

\section{Closed-loop coordination under a replay-workload constraint}
\label{sec:coord}
Having identified continuation-conditioned interaction, we ask whether a fixed coordinator can use a structural signal to improve subsequent replay allocation. All replay-allocation policies in this section use the same period-32 Fiber interface unless an explicit shadow/no-write control is specified. The central comparison is therefore coordination \emph{given Fiber}, not Fiber versus no Fiber. A complete coordinator changes future states, writes, and allocations; its effect is the policy contrast in Equation~\eqref{eq:policy}.

The development reference is full replay with Fiber. Its endpoint and the original aligned-policy endpoint are archived results, reused rather than independently rerun for each attribution control. The development comparisons examine reinforcement, withdrawal, and deprioritization after a repair. The attribution controls isolate aspects of the allocation rule, and the five-root test evaluates two prespecified policy contrasts on an unused test split.

\subsection{A bounded signal-guided allocation rule}
The reinforcement policy retains full replay and uses the pre-action signal accompanying each accepted Fiber repair. For the next 32 updates, it begins with the standard replay candidate and changes at most one row per update. A candidate non-repaired class with the highest available pre-action representative margin donates a position to a repaired class in the \emph{same source task}. A deterministic keyed draw selects an unused reservoir item. If no eligible exchange exists, the original candidate is retained.

Within each active window, the implementation requires a unique-UID candidate and keeps the replacement batch unique. It preserves replay batch size and source-task identity at each replay position. Outside these windows the baseline candidate is unchanged; no global UID-uniqueness claim is made. The margin signal is held fixed over the window; validation outcomes do not enter execution. The donor margin is a fixed heuristic, not a safety certificate. The rule is therefore a concrete workload-constrained coordinator, not an optimal allocation algorithm.

\subsection{Fixed-workload outcomes and the limits of support withdrawal}
The original Fiber-aligned policy reaches $953$ rather than $950$ correct predictions, with mean CE $0.2130189$ rather than $0.2195535$. Both use 799 replay events and 12,784 replay examples. The policy redirects 96 rows (about 0.751\%) in windows 352--383, 512--543, and 672--703, following commits at 351, 511, and 671. Terminal task differences are $(+1,+1,+1,0,0)$, comprising seven corrected items and four new errors. Task-end differences $(0,0,+1,-1,+3)$ show that this terminal gain is not trajectory-wide non-degradation.

\begin{figure}[!htbp]
\centering
\resizebox{0.96\linewidth}{!}{
\begin{tikzpicture}[x=1cm,y=1cm]
  \path[use as bounding box] (0,0) rectangle (14.90,8.11);
  \node[ffpanel] at (0,7.90) {COORDINATED REPLAY: POLICY OUTCOMES AND ATTRIBUTION};
  \node[fflabel,anchor=west,font=\sffamily\scriptsize\bfseries] at (0,7.25) {Policy};
  \node[fflabel,font=\sffamily\scriptsize\bfseries] at (5.60,7.25) {Terminal count};
  \node[fflabel,font=\sffamily\scriptsize\bfseries] at (8.05,7.25) {$J$};
  \node[fflabel,font=\sffamily\scriptsize\bfseries] at (9.03,7.25) {$\Delta J$};
  \node[fflabel,font=\sffamily\scriptsize\bfseries] at (10.22,7.25) {Writes};
  \node[fflabel,font=\sffamily\scriptsize\bfseries] at (11.57,7.25) {Events};
  \node[fflabel,font=\sffamily\scriptsize\bfseries] at (13.67,7.25) {Examples};
  \draw[ffrule] (0,6.93)--(14.90,6.93);
  \foreach \v in {0,250,500,750,1000}{
    \pgfmathsetmacro{\xx}{3.75+3.72*\v/1000}
    \draw[ffrule] (\xx,1.54)--(\xx,6.72);
    \node[fflabel,anchor=north] at (\xx,1.41) {\ifnum\v=1000\relax1,000\else\v\fi};
  }
  \draw[ffline] (3.75,1.54)--(7.47,1.54);
  \foreach \yy/\name/\j/\delta/\writes/\events/\examples in {
    6.55/Full replay + Fiber/950/{0}/3/799/{12{,}784},
    5.84/Replay withdrawal/851/{-99}/10/511/{8{,}176},
    5.13/Replay deprioritization/948/{-2}/10/799/{12{,}784},
    4.42/Guided reinforcement/953/{+3}/3/799/{12{,}784},
    3.33/Mask-matched random/953/{+3}/3/799/{12{,}784},
    2.62/Permuted signal/955/{+5}/3/799/{12{,}784},
    1.91/Shadow signal; no write/952/{+2}/0/799/{12{,}784}}{
      \node[fflabel,anchor=west] at (0,\yy) {\name};
      \node[ffmath] at (8.05,\yy) {$\j$};
      \node[ffmath] at (9.03,\yy) {$\delta$};
      \node[ffmath] at (10.22,\yy) {$\writes$};
      \node[ffmath] at (11.57,\yy) {$\events$};
      \node[ffmath] at (13.67,\yy) {$\examples$};
      \pgfmathsetmacro{\xx}{3.75+3.72*\j/1000}
      \draw[ffrule] (3.75,\yy)--(\xx,\yy);
      \node[circle,draw=FFGrayDark,fill=white,line width=.8pt,minimum size=5.1pt,inner sep=0pt] at (\xx,\yy) {};
  }
  \draw[ffdashline] (0,3.94)--(14.90,3.94);
  \node[fflabel,anchor=west,fill=white,text=FFGrayDark,inner xsep=2pt] at (.12,3.94) {Registered attribution controls};
  \node[fflabel,text=FFGrayDark] at (7.45,.81) {Same replay workload except withdrawal; writes are retained parameter changes};
  \node[fflabel,text=FFGrayDark] at (7.45,.32) {Random and permuted controls retain Fiber-triggered windows; not independent-root replications};
\end{tikzpicture}}
\caption{\textbf{Policy improvement does not identify a privileged allocation rule.} The full 0--1,000 count scale and exact numbers show the development coordination policies and the registered attribution controls. $\Delta J$ is relative to the archived full-replay-plus-Fiber reference. ``Writes'' counts retained head changes: the shadow arm accepts three proposals but rolls all three back. The other arms retain their accepted writes under the same period-32 consultation rule, not an imposed common write sequence. All rows except withdrawal match replay event and example counts. The random and permuted controls match or exceed the original aligned endpoint; this is one development-root comparison, not an uncertainty estimate or total-compute match.}
\label{fig:policies}
\end{figure}

Post-repair replay withdrawal disables replay for 32 updates after each accepted repair. Further accepted repairs extend suppression, producing blocks 352--511 and 672--799, ten commits, and terminal $J=851$. At step 479, the task-wise deficit is $(-51,-27,0)$: the loss lies in older tasks, not the current one. This rejects the implemented renewable rule on this root, not all bounded relief rules. Repair-conditioned replay deprioritization preserves all replay counts but redirects repaired-class occurrences to other classes in the same source task. It changes 112 rows, produces ten commits, and reaches $948$. Together, the two outcomes preclude treating a local repair certificate as an automatic certificate of external-support redundancy.

\subsection{What the allocation controls identify}
\label{sec:attribution}
Three registered controls retain the model, data stream, root, period-32 Fiber consultation rule, replay counts, and per-position source-task constraint. Their accepted proposals and future windows remain endogenous to each trajectory. In the observed runs, each control also makes 96 replacements in the same three windows. This last equality is a measured outcome, not a protocol that forces the historical 96-position mask or identical write tensors.

\emph{Mask-matched random} retains actual Fiber writes and uses the true pre-action signal to compute whether the aligned rule would replace a row on the control's own trajectory. At each such step, a keyed random rule selects a feasible donor position and an unused reservoir item from the same source task, without using class labels or margins for that content choice. It reaches $953$. This is a content-selection control conditional on a signal-derived change mask, not an entirely unguided replay policy.

\emph{Permuted signal} also retains the writes. Before replay selection, it relabels the signal within each ten-class task by $j\mapsto(j+3)\bmod10$, carrying the margin values with the relabeling, and applies the unchanged allocation rule. It reaches $955$. Matching or exceeding $953$ under these altered content rules means that exact Fiber-label alignment is not necessary to reach the original endpoint in this comparison. It does not show that allocation content has no effect, that permutation is generally better, or that all Fiber-derived information is dispensable.

\emph{Shadow signal/no write} computes and certifies each proposal, restores the exact pre-proposal head and commit/token state before the next ordinary update, and uses the true signal for replay allocation. It accepts three proposals, retains zero writes, and reaches $952$. The $953-952=1$ gap is a complete-policy contrast with endogenous future signals and allocations, not a controlled direct effect holding the whole replay path fixed. The shadow result shows that a positive shift over the $950$ reference is possible without retained writes; it establishes neither their necessity for improvement nor their irrelevance.

The separate single-window reinforcement comparison gives $577$ rather than $578$ at $H=32$ despite increasing the repaired margin. The full policies include later actions and allocation windows, so their terminal differences are not pure horizon extensions of that one-shot $-1$. Local repair, measured interaction, and complete-policy performance therefore require separate comparisons.

\subsection{Allocation value across training roots}
\label{sec:policy_replication}
The five-root policy study starts each of five fresh roots from update 0 under three frozen complete policies: standard full replay plus period-32 Fiber ($S$), Fiber-guided reinforcement plus period-32 Fiber ($G$), and mask-matched random content plus period-32 Fiber ($M$). Thus $S$ is not a no-Fiber baseline. The two primary estimands are $\Delta_{\rm policy}=J_G-J_S$ for the additional value of guided replay allocation and $\Delta_{\rm specificity}=J_G-J_M$ for Fiber-signal content specificity, conditional on the same Fiber construction and consultation cadence in every arm. The random arm retains its own signal-derived change mask; it is not a signal-free policy. Every arm keeps 799 replay events and 12,784 replay presentations.

The frozen policy leaves a native candidate unchanged when duplicate UIDs violate the replacement primitive's uniqueness precondition. This preserves native replay semantics; Appendix~\ref{app:policy_details} specifies this duplicate-candidate rule.

\begin{table}[!htbp]\centering
\caption{\textbf{Fresh-test primary outcomes of the five-root policy study.} All counts are out of the same 1,500 items. $S$ is standard full replay plus period-32 Fiber, $G$ is guided reinforcement, and $M$ is mask-matched random content. Each row is one paired training root. Both primary contrast vectors are mixed or inconclusive under the frozen directional criterion.}
\label{tab:policy_primary}
\begin{tabular}{lrrrrrr}\toprule
Root & $J_S$ & $J_G$ & $J_M$ & $G-S$ & $G-M$ & $M-S$\\\midrule
1&1,343&1,344&1,341&$+1$&$+3$&$-2$\\
2&1,351&1,348&1,350&$-3$&$-2$&$-1$\\
3&1,323&1,327&1,324&$+4$&$+3$&$+1$\\
4&1,363&1,360&1,363&$-3$&$-3$&0\\
5&1,346&1,344&1,347&$-2$&$-3$&$+1$\\\midrule
Mean difference&&&&$-0.6$&$-0.4$&$-0.2$\\
Median difference&&&&$-2$&$-2$&0\\\bottomrule
\end{tabular}
\end{table}

Guided-minus-standard is $(+1,-3,+4,-3,-2)$ and guided-minus-random is $(+3,-2,+3,-3,-3)$. Each is positive in two roots and negative in three. Both are \emph{mixed or inconclusive} under the frozen rule. The mean accuracy differences are $-0.040$ and approximately $-0.027$ percentage points, respectively. The primary correct-count effects are therefore root-dependent under the frozen criterion, with no root-stable positive advantage for guided allocation or for exact Fiber-label alignment over the matched-random content control. This is not proof of exact equivalence or a negative population effect.

\paragraph{Secondary continuous readout.}
Guided-minus-standard test CE is negative in all five roots, with vector
\begin{equation}
 (-0.0108714,\,-0.0004370,\,-0.0005938,\,-0.0005780,\,-0.0005924).
 \label{eq:policy_ce}
\end{equation}
Its mean is $-0.0026145$ and median $-0.0005924$; the mean is strongly influenced by Root~1. Guided-minus-random CE is negative in four roots, with mean $-0.0020247$ and median $-0.0017579$. Lower mean CE indicates a better aggregate log-loss on this panel, not improved calibration or margins for every item. It is compatible with a worse correct count. The primary count verdict is unchanged. Appendix~\ref{app:policy_details} reports all CE values and the reused-validation comparisons; validation alone favors random content directionally, whereas its fresh-test count contrast is mixed.

\paragraph{The external rule changes later intrinsic decisions.}
The comparison also makes endogenous feedback observable. On Root~1, standard and random policies commit at $(511,543,671)$, while guided commits at $(511,671)$. On Root~3, standard and random commit only at 511, while guided also commits at 671. After the first allocation change the trajectories are no longer a common parent; subsequent geometry and activation can differ. These are closed-loop policy outcomes, not additional matched-parent interaction coefficients or proof that any changed event mediates the terminal score. They show why holding the Fiber construction and period-32 opportunities fixed does not hold realized writes fixed.

The three-arm comparison does not test Fiber removal, retained-write necessity, or the causal value of trigger-time selection; its secondary CE result cannot supply those comparisons. The development controls and fresh-root comparison therefore show that identifying intrinsic--extrinsic interaction does not by itself determine a uniformly beneficial fixed allocation rule. The five-root correct-count effects instead characterize the root-dependent value of that rule with Fiber present in every arm.

\FloatBarrier
\section{Repair geometry, functional readouts, and sufficient-quality control}
\label{sec:geometryinterpretation}
\subsection{Three spaces that must remain distinct}
Corollary~\ref{cor:repairset} characterizes an ideal \emph{parameter-space} set on an immutable finite frame. A different question concerns favorable \emph{terminal outputs}. A third asks whether a low-cost rule identifies a \emph{training-reachable state or control region} with favorable future value. The first two can be analyzed with the stored data; they do not answer the third.

For an output matrix $L$ and a fixed set of items $S$ that are strictly correctly classified, define
\begin{equation}
 \mathcal C_S=\{L:L_{i,y_i}-L_{i,k}>0\ \text{for every }i\in S,\ k\ne y_i\}.
 \label{eq:outputcell}
\end{equation}
It is an open convex polyhedral region and has $J(L)\ge |S|$. Hence two output matrices that strictly agree on $|S|$ correct items preserve those items along their entire logit segment. Away from top-logit ties, the superlevel set $\{L:J(L)\ge k\}$ is a union of the regions $\mathcal C_S$ over size-$k$ item sets $S$ and need not be convex. At ties, classification follows the declared lowest-index argmax convention. This is a property of the readout, not a Fiber-specific theorem.

The development random and permuted controls share 951 correct items, already certifying a segment with $J\ge951>950$. Exact margin-boundary calculations on their stored FP32 logits sharpen this to a minimum of 953. The random--shadow and permuted--shadow segments each have minimum 952. These three segments form a favorable connected subset \emph{of output space}; CE convexity also places their entire logit convex hull below the reference CE. Appendix~\ref{app:outputgeometry} gives the proof, boundary convention, and numerical certificates. These are algebraic analyses of saved outputs, not newly trained ensembles, parameter interpolation experiments, or realizable replay-mixture trajectories. No whole-triangle count guarantee is inferred from its boundary.

\subsection{Similar functional quality does not require state convergence}
The final development reallocation window ends at step 703. Thereafter the random and permuted arms use the same ordinary exogenous sequence and have no further nonzero Fiber action. Nevertheless, the stored head distance $\norm{\Theta_M-\Theta_P}_F$ is $0.24563$, $0.25140$, $0.25227$, and $0.25538$ at steps 703, 735, 767, and 799. Similar terminal counts, 953 and 955, therefore cannot be used as evidence that head differences disappeared or contracted monotonically.

Correct count records whether margins cross zero; mean CE records an aggregate continuous loss. Either can be insensitive to some state differences without the state differences vanishing. The five-root allocation comparison and the native-optimizer interaction study likewise distinguish count from CE outcomes. That common distinction does not identify a shared cause. A head-coordinate comparison also does not rule out contraction of other components or attenuation of a future-readout projection.

\subsection{Local feasibility is not future-value maximization}
The minimum-norm write solves a prescribed present-frame problem. It need not maximize future accuracy, and a commit does not require subsequently solving that harder problem. On a differentiable fixed branch, the first-order sensitivity of a smooth terminal utility $S(x_T)$ to a continuous control relaxation at $t$ is
\begin{equation}
 D_{u_t}S(x_T)=B_t^\top\Phi_{T,t+1}^\top\nabla S(x_T),
 \quad A_t=D_xF_t,\quad B_t=D_uF_t,\quad
 \Phi_{T,t+1}=A_{T-1}\cdots A_{t+1}.
 \label{eq:future_sensitivity}
\end{equation}
The chain rule through training \cite{hypergrad} includes optimizer state, subsequent updates, and the final readout. It is not in general the current representative-margin gradient. The discrete replay choices and switching acceptance rule require additional assumptions before such a derivative can describe them. No future-gradient alignment, repair cone, or contraction coefficient is estimated here. Appendix~\ref{app:valueboundaries} derives the conditional sensitivity result; Appendix~\ref{app:policy_difference} gives an exact finite-policy decomposition without a differentiability assumption.

A legitimate engineering objective is to minimize selection cost subject to a required quality level, rather than maximize unpriced terminal utility:
\begin{equation}
 \min_{\Pi}C_{\rm selection}(\Pi)
 \quad\text{subject to}\quad V(\Pi)\ge v_{\rm required}.
 \label{eq:sufficient}
\end{equation}
This states a design objective, not a claim that the present coordinator solves it or meets a universal quality threshold. The rule uses no online future-rollout search and no learned outcome selector. The three development attribution arms each log 20 extra encoder calls over 648 frame rows and zero extra backward passes for Fiber consultation. These component counts do not include all host computation, certification, or wall time. They support a bounded structural interface, not negligible overhead, equal total compute, or optimal cost-efficiency.

Multiple favorable realizations do not imply multiple global optima, and a small gap among tested policies does not bound the best untested policy. Conversely, a useful engineering rule need not establish global optimality. The evidence does not justify avoiding search on the assumption that continued training necessarily erases its benefits. The present rule supplies a bounded structural intervention without online future-rollout search while leaving priced future-value selection as a distinct objective.

\section{Related work and scope of the contribution}
\label{sec:related}
\paragraph{State, observation, and sequential value.}
Classical nonlinear control distinguishes state from output and studies distinguishability through future input--output behavior \cite{hk}. Dynamic programming and reinforcement learning make value conditional on subsequent decisions \cite{bellman,sutton}. These established distinctions provide the starting point for the present formulation. The present fiber is an instantaneous level set of a declared observation; its directions need not remain invisible after learning. The head transaction is a constrained state reset. We neither prove controllability through training inputs nor infer a closed low-dimensional evolution from an observation frame.

Fiber Fingerprints specializes present/future distinctions to complete learning execution states and controlled future training \cite{p1}. Revelation Control already studies priced intervention choice and separates information value from productive reuse \cite{p2}. The present paper connects observation-relative intrinsic actuation to readout-specific intrinsic--extrinsic coupling: a finite write realizes the internal interface, matched replay and distillation continuations identify readout-specific interaction, while separate policy tests examine coordination. Function-preserving ReLU rescaling also changes optimization dynamics \cite{pathconditioned}; present agreement with different future learning is therefore not sufficient to distinguish the present contribution.

\paragraph{Protected updates and model editing.}
The least-norm construction is an application of generalized-inverse linear algebra \cite{penrose}. In continual learning, OGD projects gradients to protect previous outputs, and GPM uses activation subspaces to restrict updates \cite{ogd,gpm}. In model editing, ROME and MEMIT directly modify selected parameter associations \cite{rome,memit}. AlphaEdit is the closest algebraic comparator: its update is projected into a null space associated with preserved knowledge \cite{alphaedit}. We do not claim null-space protection as new. Our instantiated constraints instead protect a fixed \emph{current} frame while changing selected \emph{historical} margins, and explicitly certify the rounded write. This specifies a controlled actuation interface, not general superiority over those editing algorithms.

\paragraph{What later training does to an edit.}
Wen and Zhang directly study how downstream fine-tuning affects edited knowledge and compare editing methods and fine-tuning objectives \cite{retention}. Our evidence is organized around a different estimand: the same pre-action parent and proposal are evaluated in all four execute/identity and replay-on/off branches. This identifies a replay-conditioned difference in \emph{marginal endpoint utility}, including prediction-level saturation with a remaining margin difference. We then test support withdrawal and count-matched coordination. The distinction is the specified constrained-write/factorial-continuation comparison and its policy-level follow-up, not a claim that interactions between editing and training were previously unknown.

\paragraph{Continual learning and replay allocation.}
Experience replay has a long history \cite{lin} and remains effective with small episodic memories \cite{er}. LwF and EWC constrain subsequent learning through distillation or parameter regularization \cite{lwf,ewc}; GEM and A-GEM use episodic information to constrain updates \cite{gem,agem}. These methods are not simply ``external'' in an ontological sense: they also depend on learned internal quantities. Here, the terminology distinguishes a direct state intervention from a subsequent training rule.

State-dependent replay selection also predates this work. MIR selects memories by predicted interference under a candidate update \cite{mir}. Our bounded coordinator uses a fixed Fiber-derived signal and preserves replay count and source-task positions; it is not proposed as the first adaptive replay method or evaluated against MIR, GEM, or model-editing baselines. The withdrawal result tests whether \emph{this} local certificate licenses reduced support. The registered controls and five-root study in Section~\ref{sec:coord} evaluate extra replay-allocation value conditional on period-32 Fiber; they do not identify uniquely valuable semantic guidance or a necessary retained write. Self-Interventional Learning instead studies structural perturbations and predictive self-models used to guide actions \cite{sil}; our policy does not train such a self-model.

\paragraph{Factorial interaction versus dynamical mechanism.}
Potential outcomes and factorial interactions are established causal tools \cite{rubin,factorial}. Equation~\eqref{eq:interaction} is their unnormalized mixed contrast, not a new interaction statistic. All four computational branches are observed for a fixed root, but repeated evaluation items are not independent training-root replications. The design is not an observational time-series difference-in-differences study, and no parallel-trends argument is used. It identifies the implemented replay treatment, including its batch-composition, mean-loss-normalization, and randomization consequences.

Learning-order analysis can study operator-level interactions through modified dynamics or gradient-field commutators \cite{dherin}. A scalar factorial contrast is weaker: nonlinear readout alone can create non-additivity even when state-update maps commute, as Appendix~\ref{app:contrast} demonstrates. Accordingly, ``coupling'' here denotes the identified continuation dependence of intervention effects, not a recovered physical interaction term or a universal dynamical law.

\FloatBarrier

\FloatBarrier
\section{Discussion and limitations}
\label{sec:discussion}
\subsection{What transfers across learning contexts}
The finite-frame interface turns a declared observation constraint into a certified state change. Matched four-cell branches then establish that the functional value of that change can depend on the external continuation. Occurrences under replay, output distillation, a RoBERTa backbone, and SGDW extend the observation beyond the initial BERT/AdamW replay example. Across these settings, the observation constraint and action rule remain explicit, while each experiment measures value under its specified continuation. The realized parent, direction, activation rate, sign, and magnitude need not be invariant.

Native, phase-matched deployment respects the trajectory generated by each optimizer; update-scale matching asks a complementary sensitivity question under a measured scale constraint. Both address readout-specific continuation dependence under their respective protocols. Neither matches complete vector fields or identifies a universal dynamical law.

\subsection{Interaction is not a policy guarantee}
The negative interactions in the native-optimizer study show that replay can reduce an intrinsic write's incremental value even when replay itself is useful. A zero interaction, conversely, can accompany positive conditional effects. Coupling therefore denotes non-additivity on the specified readout, rather than positive synergy.

A complete coordinator introduces additional feedback and requires separate evaluation. With Fiber in every arm, the correct-count effect of guided allocation varies across the five fresh roots rather than remaining uniformly beneficial; exact label alignment likewise shows no root-stable advantage over the mask-matched random content control. The lower aggregate CE against standard replay in all five roots remains secondary evidence. The complete-policy test therefore measures the root-dependent value of a fixed allocation rule at a different evidentiary level from matched interaction identification.

The attribution controls preserve different components of Fiber information: the random-content policy keeps a signal-derived change mask, permutation keeps signal-triggered windows, and the shadow policy keeps sensing while withholding the write. Consequently, they neither show that all Fiber information is dispensable nor establish a unique value for trigger timing or the necessity of a retained write. Later activations and allocations are endogenous after trajectories diverge. The affine repair set and favorable output paths specify useful mathematical objects, but neither identifies a training-reachable long-horizon repair basin or the mediator of each policy outcome. Similar terminal quality does not require convergence of the underlying states.

\subsection{Scope, resources, and remaining questions}
All studies share one selected CLINC class partition, task order, and small-memory regime. Backbone extension covers BERT-base-cased and RoBERTa-base; optimizer extension covers particular AdamW and momentum-SGD recipes. Mechanism measurements use reused validation panels, selected legal activations, and few roots. Growing active-class panels limit temporal comparisons unless a common readout is verified. The five-root policy study adds five fresh training roots and an unused test split, not new tasks or pretrained models. All root-wise primary contrasts and phase nonactivations are reported. Repeated horizons and optimizer diagnostics on shared roots are not independent confirmations, and the descriptive direction criteria are not population significance tests.

The transaction protects a finite inference frame and specified state coordinates, rather than all inputs or training-mode gradients. Its frame, its margin-deficiency signal, and a future-response fingerprint are distinct objects. Historical representatives remain accessible in the distillation experiments even though the ordinary loss has no replay. Equal replay or teacher-forward workload is not equal total compute: sensing, calibration, cloning, certification, and host-side selection have separate costs. A small changed-slot fraction does not bound a gradient or state perturbation. Offline numerical verification checked stored results and algebraic certificates; it did not reconstruct complete checkpoints or full GPU trajectories.

The resulting engineering question is how much future benefit can be obtained from an admissible structural intervention at an acceptable selection cost. The present rule needs no online future-rollout search, but the experiments do not identify a global optimum, a universal quality threshold, or a formal total-cost advantage. Progress on those stronger objectives would require evidence about the future-value information carried by the structural signal, beyond the local feasibility and conditional interaction established here.

\Needspace{12\baselineskip}
\section{Conclusion}
Intrinsic--extrinsic coupling concerns how direct learning-state intervention and subsequent learning jointly determine functional outcomes. Observation-relative fibers provide one precise description and actuation interface. Their finite-frame realization protects current observations while repairing specified historical margins. The local certificate establishes admissibility, while matched branches measure continuation-conditioned future value.

Matched experiments identify readout-specific four-cell non-additivity under replay and distillation, with further occurrences under a second backbone and optimizer-native SGDW dynamics. Complete-policy tests provide a complementary boundary: the identified coupling does not imply a universally beneficial fixed replay-allocation rule; correct-count effects remain root-dependent across five fresh training roots. Together, these results connect executable state geometry, conditional future value, matched interaction identification, and coordination while preserving the distinctions among them. Geometry defines admissible action; continuation and readout determine value; policy performance requires its own evidence.

\FloatBarrier
\clearpage
\begingroup
\small
\setlength{\parskip}{0pt}

\endgroup

\FloatBarrier
\clearpage
\appendix
\section{Mathematical guarantees and interpretation boundaries}
\label{app:proof}
We first prove the finite-frame interpolation, norm, and tolerance results, then distinguish scalar interaction, future-value sensitivity, and complete-policy effects.

\subsection{Projection, feasibility, and the least-norm property}
Let $U=\Delta\Theta^\top\in\R^{d\times q}$. Since $C$ has full row rank,
\begin{equation}
 P_C^\top=P_C,\quad P_C^2=P_C,\quad CP_C=0,
 \quad \operatorname{range}P_C=\ker C.
\end{equation}
Thus $CU=0$ is equivalent to $U=P_CU$, and the remaining constraint is $NU=D$, with $N=\mathsf H P_C$. Each row of $N$ is the orthogonal residual of one historical row outside the current row space. Consequently,
\begin{equation}
 \operatorname{rank}\!\begin{bmatrix}C\\\mathsf H\end{bmatrix}
 =\operatorname{rank}C+\operatorname{rank}N.
\end{equation}
If $N$ has full row rank, set
\begin{equation}
 U_*=N^\top(NN^\top)^{-1}D.
\end{equation}
Because $CN^\top=0$ and $\mathsf H N^\top=\mathsf H P_C\mathsf H^\top=NN^\top$, this satisfies $CU_*=0$ and $\mathsf H U_*=D$.

For any other feasible solution $U=U_*+Z$, one has $CZ=0$ and $NZ=0$. Every column of $U_*$ lies in $\operatorname{range}N^\top$, orthogonal to $\ker N$. Therefore
\begin{equation}
 \norm{U}_F^2=\norm{U_*}_F^2+\norm{Z}_F^2.
\end{equation}
This proves uniqueness of the least-norm solution. If $D\mathbf1=0$, then $U_*\mathbf1=0$, establishing the centering claim. Each head row separately lies in $\ker C$, so the action subspace dimension is $q(d-\operatorname{rank}C)$.

Even without full row rank of $N$, the constraints are feasible if and only if every column of $D$ lies in $\operatorname{range}N$. Necessity follows from $NU=D$. For sufficiency, take any $U_0$ with $NU_0=D$ and use $U=P_CU_0$; since $NP_C=N$, this satisfies both constraints. This argument does not prescribe the numerical implementation for singular frames; the implementation has no singular-solve fallback.

\subsection{Geometry-conditioned cost}
\label{app:costproof}
Put $G_N=NN^\top$. With $G_N$ symmetric positive definite,
\begin{equation}
 \begin{aligned}
 \norm{U_*}_F^2
 &=\operatorname{tr}\!\left(D^\top G_N^{-1}NN^\top G_N^{-1}D\right)\\
 &=\operatorname{tr}\!\left(D^\top G_N^{-1}D\right).
 \end{aligned}
\end{equation}
Diagonalizing $G_N$ in its orthonormal eigenvectors $\ell_i$, with eigenvalues $\sigma_i(N)^2$, gives Equation~\eqref{eq:geocost}. Bounding the reciprocal eigenvalues between $\sigma_{\max}^{-2}$ and $\sigma_{\min}^{-2}$ gives Equation~\eqref{eq:costbounds}. Orthogonality in the previous subsection shows why an exact ideal minimum above a budget excludes all exact solutions of that same target within the budget.

A small illustrative construction separates target size from geometry. Take $C=(1,0,1)$, $\mathsf H=(1,\epsilon,1)$, with $\epsilon>0$, and a fixed one-row target $D$. Then $N=(0,\epsilon,0)$ and
\begin{equation}
 \norm{\Delta\Theta_*}_F^2=\norm{D}_F^2/\epsilon^2.
\end{equation}
The requested logit changes have not grown as $\epsilon$ decreases; their realization becomes expensive because the historical feature approaches the protected current span. This is an algebraic example, not a measurement of the late LwF rejections.

\subsection{Margins, tolerances, and saturation}
\label{app:tolerance}
For every incorrect class $j$, the ideal target obeys $D_{i,y_i}-D_{i,j}=d_i$. The new pairwise margin is the original one plus $d_i$, proving Proposition~\ref{prop:leastnorm}'s margin claim. For the rounded write define
\begin{equation}
 E=\mathsf H\widehat{\Delta\Theta}^{\top}-D.
\end{equation}
Every pairwise post-margin differs from its ideal value by $E_{i,y_i}-E_{i,j}$, of absolute value at most $2\varepsilon_H$. Taking minima preserves this bound, yielding Equation~\eqref{eq:marginerror}. Since $\eta=2^{-10}=4\varepsilon_H$, a deficient row has lower bound $\mu+\eta-2\varepsilon_H=1+2^{-11}$. An initially nondeficient row does not have this overshoot unless already separated from the threshold. This is why interpolation tolerance does not remove the need for the explicit all-representative post-margin check.

For current logits $z$ and perturbation $e$ with $\norm{e}_\infty\le\varepsilon_C$, the log-sum-exp obeys
\begin{equation}
 e^{-\varepsilon_C}\sum_j e^{z_j}
 \le\sum_j e^{z_j+e_j}
 \le e^{\varepsilon_C}\sum_j e^{z_j}.
\end{equation}
Consequently its logarithm changes by at most $\varepsilon_C$. Since
$\operatorname{CE}(z,y)=\log\sum_j e^{z_j}-z_y$, the per-item CE change is at most $2\varepsilon_C$, as is the absolute mean change on the same panel. These are exact-real-arithmetic statements about the specified logits. The implementation independently checks native FP32 staged/live values and predictions.

After ideal repair all represented margins are at least $\mu$, so recomputation of Equation~\eqref{eq:target} gives $D=0$. The same argument applies to a rounded write accepted by the exact stored-frame all-margin postcheck. Reacquiring a frame after learning is a different operation, so no future-invariance conclusion follows.

\subsection{Preserved observations need not preserve backpropagation}
\label{app:gradient}
For the differentiable scalar loss $\ell(Z)$ with $Z=C\Theta^\top$, let $G=\nabla_Z\ell\in\R^{n_c\times q}$. The matrix chain rule gives
\begin{equation}
 \nabla_\Theta\ell=G^\top C,\qquad \nabla_C\ell=G\Theta.
\end{equation}
An exact write satisfying $C\Delta\Theta^\top=0$ keeps $Z$, $\ell$, and $G$ unchanged on the identical frozen frame. Its loss gradient with respect to the head is also unchanged in this particular calculation, but its derivative with respect to features changes by $G\Delta\Theta$. If $C=C_\psi$ comes from an encoder,
\begin{equation}
 \Delta(\nabla_\psi\ell)=(D_\psi C_\psi)^*\!\left[G\Delta\Theta\right],
\end{equation}
where the bias-one feature coordinate has zero encoder derivative. The null-space condition does not force this quantity to zero. Weight decay or subsequent different inputs provide further reasons not to infer an identical optimization step. The implemented frame is inference-mode and only toleranced after rounding, so these identities do not assert equality of the actual next training-mode head gradients either.

This calculation does not predict the sign of future utility or identify which path produced an empirical effect. It establishes that current-frame output preservation is compatible with changed future learning.

\subsection{Interaction is a contrast, not an operator commutator}
\label{app:contrast}
Proposition~\ref{prop:bilateral} concerns four scalar outcomes. It does not require differentiability. To see why nonzero interaction alone cannot establish noncommuting state updates, consider two scalar translations $z\mapsto z+f$ and $z\mapsto z+a$. They commute and give the additive endpoint $z+f+a$. For utility $J(z)=z^2$,
\begin{equation}
 J(z+f+a)-J(z+f)-J(z+a)+J(z)=2fa,
\end{equation}
which is nonzero when $fa\ne0$. This illustrative example is not a model of the neural experiments. It prevents a stronger operator-level interpretation from being attached to an outcome-level contrast without additional evidence. On an affine utility rescaling $J'=b+cJ$, the contrast becomes $I'=cI$; a nonlinear rescaling has no such general preservation rule.

\subsection{Future-value sensitivity and current margins}
\label{app:valueboundaries}
On a fixed-dimensional, differentiable branch, let $x_{t+1}=F_t(x_t,u_t)$ include every continuous model, optimizer, and controller coordinate needed for the subsequent rule. For a continuous relaxation of $u_t$, the chain rule gives Equation~\eqref{eq:future_sensitivity}. For multiple perturbations, under twice continuous differentiability on a neighborhood containing the perturbed branch,
\begin{equation}
 \delta S=\sum_t\langle s_t,\delta u_t\rangle+O(\norm{\delta u}^2),
 \qquad s_t=B_t^\top\Phi_{T,t+1}^\top\nabla S(x_T).
\end{equation}
This is standard differentiation through optimization \cite{hypergrad}. It is not an empirical measurement of $s_t$ in the present study. Adam-family moments are part of $x_t$ \cite{adam,adamw}; writing the full-state update as a plain parameter gradient step would omit these coordinates.

For one perturbed control, write $\widetilde S(u)=S(x_T(u))$ along the fixed branch, with other controls fixed. If $a=\langle s_t,d\rangle>0$ for a feasible direction $d$ and the Hessian of $\widetilde S$ has norm at most $M>0$ along a feasible segment $\{u_t+\epsilon d:0\le\epsilon\le\epsilon_0\}$, Taylor's theorem gives
\begin{equation}
 \widetilde S(u_t+\epsilon d)-\widetilde S(u_t)\ge\epsilon a-\tfrac12M\epsilon^2\norm d^2>0
 \quad\text{for }0<\epsilon<\min\!\left\{\epsilon_0,\frac{2a}{M\norm d^2}\right\}.
\end{equation}
A local favorable half-space must be intersected with the feasible tangent directions and a radius justified by the remainder bound. Native UID selection and acceptance logic are discrete or switching; these differentiability and relaxation assumptions do not automatically hold for the measured policy. Correct count is locally constant away from decision boundaries, so a nonzero smooth $\nabla J$ cannot be substituted for a suitable smooth utility such as $-\mathrm{CE}$.

Even in plain SGD, for a twice continuously differentiable fixed pairwise margin $m$ near $\theta$ and $\theta'=\theta-\eta g$, one has
\begin{equation}
 m(\theta')-m(\theta)=-\eta\langle\nabla m,g\rangle+O(\eta^2\norm g^2).
\end{equation}
The sign depends on whether a direction is defined as a loss gradient or an actual update. More importantly, this current sensitivity differs from the future-conditioned sensitivity $s_t$.

For softmax CE, $z=Wh$ and $p=\operatorname{softmax}(z)$ give
\begin{equation}
 \frac{\partial\ell}{\partial W_j}=(p_j-\mathbf1_{j=y})h,
 \qquad \nabla_{\theta_{\rm enc}}\ell=J_h^\top W^\top(p-e_y).
\end{equation}
Thus a replay item can affect several classes and shared features. Let $J_i=D_\theta z_i$ be a logit Jacobian (distinct from the correct-count symbol $J$). For a fixed pairwise test margin $m_{i,k}$, its one-step SGD change from replay item $j$ is
\begin{equation}
 \delta m_{i,k}=-\eta(e_{y_i}-e_k)^\top J_iJ_j^\top(p_j-e_{y_j})+O(\eta^2).
\end{equation}
The cross-Jacobian block governs the effect; different label names neither imply orthogonal updates nor determine the sign. This is a finite-network kernel expression, without an infinite-width or constant-kernel assumption \cite{ntk}. The permuted policy reinforces labels 14, 13, 15, and 21, while its five corrected reference errors occur in labels 0, 12, and 24. That excludes a strictly label-confined correctness-effect description, not all effects on reinforced-class losses or margins.

\subsection{An exact closed-loop difference identity}
\label{app:policy_difference}
Fix a random tape and a common initial state, augmented with policy memory as needed. Let $G_t^\pi$ be the complete one-step closed-loop map. For a reference policy $\beta$, define
\begin{equation}
 V_t^\beta(x)=J_Q(G_{T-1}^\beta\circ\cdots\circ G_t^\beta(x)),
 \qquad V_T^\beta=J_Q.
\end{equation}
Along the trajectory generated by $\alpha$, one has the exact identity
\begin{equation}
 \begin{split}
 J_Q(x_T^\alpha)-J_Q(x_T^\beta)
 =\sum_{t=0}^{T-1}\bigl[
 V_{t+1}^\beta(G_t^\alpha(x_t^\alpha))
 -V_{t+1}^\beta(G_t^\beta(x_t^\alpha))\bigr].
 \end{split}
 \label{eq:policydifference_exact}
\end{equation}
Indeed, the first term is $V_{t+1}^\beta(x_{t+1}^\alpha)$ and the second is $V_t^\beta(x_t^\alpha)$, so the sum telescopes. No differentiability is required. The identity requires the reference policy to be well defined on the encountered states. It formalizes the downstream aggregation in a complete-policy contrast. The attribution study and five-root policy study do not measure each counterfactual suffix on its right-hand side. Endpoint differences alone cannot assign separate causal credit to label choice, a later write, or a trigger time.

\subsection{Conditional stability, not automatic absorption}
Let $a_t,b_t\ge0$. If, for all compared steps in a region containing the trajectories,
\begin{equation}
 \norm{\delta x_{t+1}}\le a_t\norm{\delta x_t}+b_t\norm{\delta u_t},
\end{equation}
then repeated substitution yields
\begin{equation}
 \norm{\delta x_T}\le
 \left(\prod_{j=0}^{T-1}a_j\right)\norm{\delta x_0}
 +\sum_{t=0}^{T-1}b_t\norm{\delta u_t}\prod_{j=t+1}^{T-1}a_j.
\end{equation}
Empty products equal one. Small suffix products support attenuation; merely training longer does not supply that condition. Stability results for stochastic gradient methods likewise require explicit assumptions \cite{stability}. Moreover, $\norm{\Phi\delta x}$ can remain appreciable while $|\nabla S^\top\Phi\delta x|$ is small. This is readout insensitivity, not disappearance of state. The measured head separation in Appendix~\ref{app:outputgeometry} prevents using similar counts as a substitute for a contraction estimate.

\subsection{Favorable observations and expensive selection}
Smoothness alone does not turn finitely many favorable control values into a favorable interval. For example, $f(\alpha)=1-8\alpha(1-\alpha)$ is positive at both endpoints of $[0,1]$ but negative at its midpoint. More generally, smooth perturbations that vanish on all sampled controls can change unobserved regions. The saved-output segment proof works because its logit map is known to be exactly affine in $\alpha$, an assumption not established for actual training-policy interpolation.

The sufficient-quality objective in Equation~\eqref{eq:sufficient} does not require a globally maximal terminal score. Nor does having several favorable actions rule out a unique maximizer: maximizing $s^\top d$ over $\norm d\le r$ for $s\ne0$ has the unique optimum $rs/\norm s$, while many directions have positive value. For an expensive search to be preferable to a fixed structural rule, its extra value must justify its extra cost under an explicit conversion factor $\lambda$:
\begin{equation}
 V(\Pi_{\rm search})-V(\Pi_{\rm fixed})
 >\lambda\bigl[C(\Pi_{\rm search})-C(\Pi_{\rm fixed})\bigr].
\end{equation}
A naive comparison of $K$ candidate actions under $R$ future realizations and horizon $H$ requires $KRH$ ordinary rollout updates per decision, apart from cloning and evaluation. This is an accounting example, not a complexity lower bound: shared prefixes, analytic information, or other algorithms may change the cost. No global optimality gap, search-cost ratio, or value of $\lambda$ is estimated here. The observation that the archived aligned policy lies two counts below the best measured development control is not a global regret certificate or a new-root near-optimality claim.

\FloatBarrier
\section{Implementation and numerical certification}
\label{app:protocol}
\begin{table}[!htbp]\centering
\caption{Base settings for the development and five-root policy comparisons; backbone and optimizer exceptions are specified in Section~\ref{sec:protocol} and Appendix~\ref{app:extensions}. The Fiber actuator and its numerical acceptance rule are unchanged across replay and distillation.}
\label{tab:settings}
\begin{tabularx}{\linewidth}{lX}\toprule
Component&Setting\\\midrule
Model&BERT-base-cased with a growing classifier head\\
Task stream&Five tasks, ten selected classes each; five epochs per task\\
Current batch&32, except the eight-item epoch tail\\
Optimizer&AdamW \cite{adamw}; encoder LR $10^{-5}$, head LR $10^{-3}$; weight decay $5\times10^{-4}$; betas $(0.9,0.999)$, epsilon $10^{-8}$\\
Gradient clipping&Global norm 1\\
Memory/replay&150 reservoir slots; at most 16 replay examples per update\\
Evaluation&All seen heads concatenated; batch size 128; 1,000 reused-validation items at termination; the five-root policy study also uses a fixed 1,500-item test split\\
Fiber target&$\mu=1$, $\eta=1/1024$; unit head-write budget\\
Fiber schedule&Steps 191 to 799 inclusive, separated by 32 updates\\
Runtime&PyTorch 2.10.0+cu128, CUDA 12.8, Transformers 5.0.0, NumPy 2.3.5; T4 GPUs\\\bottomrule
\end{tabularx}
\end{table}

Inputs are tokenized to a maximum length of 50. Per-step batch records give 25,000 current-example presentations over 800 updates. Full replay adds 12,784 historical-example presentations, for 37,784 presentations in the merged training batches. Its first replay event is at step 1; the reservoir is empty before update 0. Distillation begins at task 1, step 160, and the first Fiber consultation is at 191.

Within each matched contrast, current inputs, memory admissions, randomization keys, and head-growth events are held fixed as specified in Section~\ref{sec:protocol}. Treatment-dependent replay selection and gradients are allowed to differ. The development replay controls reuse the archived full-replay-plus-Fiber reference. The five-root policy study executes all three complete policies from update 0 on every fresh root, with the periodic Fiber actuator present in each arm. In distillation, each branch constructs its teacher from its own preceding student; teacher generation follows the same rule but its outputs can differ after intervention.

The endpoint readouts are correct count, mean cross-entropy, and task-wise correct counts over all active classes. The 32-update four-cell experiment evaluates 400 old-task and 200 current-task items. At task ends the evaluation panel expands from 200 to 1,000 items; between-policy differences are paired within an endpoint, not between different endpoint populations. Structural endpoint diagnostics are evaluated on a copy and do not introduce an additional learning action.

\subsection{Offline numerical verification}
An internal numerical verification archive was used to audit stored readouts and algebraic certificates offline. The checks used development readouts, compact text-free logits and labels for the extensions, four-cell outcome tables, phase activation records, per-root policy outcomes, and stored frames for the mathematical calculations. The offline verifier recomputed predictions, correct counts, and mean CE where logits were stored; the initial 32-update replay records instead retained predictions and per-item losses. Contrasts were checked on common endpoint panels.

The internal verification records map each study and local root number to its original records. Workload, acceptance outcomes, calibration selection, exact source revisions, and explicit exogenous bindings were checked against those records without treating same-numbered roots in different studies as matched. Component-level restoration records and construction code support the branch-matching design; agreement of scores alone is not a matching criterion.

Offline verification compared stored numerical results and algebraic certificates against the original protocols and raw records. The internal records omit input text, tokenizer assets, pretrained weights, and complete checkpoints, and therefore do not reproduce complete GPU trajectories. Numerical verification checks consistency of the reported results; inferential scope remains determined by the experimental design.

\FloatBarrier
\section{Additional interaction results and optimizer calibration}
\label{app:extensions}
The tables report matched four-cell outcomes. Where present, $Y_{00},Y_{01},Y_{10},Y_{11}$ denote identity/off, identity/on, execute/off, and execute/on. Contrasts on $-\mathrm{CE}$ use negative cross-entropy as utility. Branch-level CE values were checked against stored branch readouts and evaluation-panel bindings during offline verification. Repeated horizons and anchors are not additional independent training roots.

\subsection{Baseline replay readouts}
\label{app:replay_readouts}
\begin{table}[H]\centering
\caption{Branch readouts underlying Figure~\ref{fig:fourcell} and Table~\ref{tab:taskeffects}, at a 32-update horizon. All branches share the same pre-action parent, proposal specification, and 600-item panel. The last column records a diagnostic on a copy of the endpoint.}
\label{tab:h32}
\begin{tabular}{llrrrr}\toprule
Anchor action&Future replay&$J$&Old-task $J$&$m_{11}$&Would commit\\\midrule
Execute&On&578&385&7.6531&No\\
Identity&On&578&385&7.1279&No\\
Execute&Off&500&305&0.4308&Yes\\
Identity&Off&495&300&$-0.7705$&Yes\\\bottomrule
\end{tabular}
\end{table}

\subsection{Distillation and backbone extensions}
The distillation parents come from the first three legal commits of the LwF-plus-Fiber development prefix. Prefix writes prior to each anchor are retained as part of that parent's history; branch continuations have no later writes. The optimizer comparisons instead use unactuated scouts. The RoBERTa study uses the first legal nonzero event on the period-32 schedule by 671 in each new root, with no outcome-based replacement.

\begin{table}[!htbp]\centering\small
\caption{Full distillation four-cell readouts. One development root, three anchors, two horizons; panels expand where shown.}
\label{tab:distillation_full}
\begin{tabular}{lrrrrrrrrr}\toprule
Root&Anchor&$H$&$n$&$Y_{00}$&$Y_{01}$&$Y_{10}$&$Y_{11}$&$I_J$&$I_{-\mathrm{CE}}$\\\midrule
Dev. & 191 & 32 & 400 & 315 & 346 & 315 & 349 & $+3$ & $+0.015386$\\
Dev. & 191 & 128 & 400 & 305 & 305 & 305 & 306 & $+1$ & $+0.006555$\\
Dev. & 223 & 32 & 400 & 318 & 335 & 322 & 341 & $+2$ & $-0.041508$\\
Dev. & 223 & 128 & 600 & 372 & 458 & 376 & 458 & $-4$ & $-0.014583$\\
Dev. & 255 & 32 & 400 & 299 & 320 & 299 & 325 & $+5$ & $+0.000954$\\
Dev. & 255 & 128 & 600 & 336 & 375 & 344 & 376 & $-7$ & $+0.003637$\\
\bottomrule\end{tabular}
\end{table}

\begin{table}[!htbp]\centering\small
\caption{Full RoBERTa readouts. The three root numbers are local to the RoBERTa comparison. A positive interaction need not mean a positive Fiber effect with replay.}
\label{tab:backbone_full}
\begin{tabular}{lrrrrrrrrr}\toprule
Root&Anchor&$H$&$n$&$Y_{00}$&$Y_{01}$&$Y_{10}$&$Y_{11}$&$I_J$&$I_{-\mathrm{CE}}$\\\midrule
1 & 511 & 32 & 800 & 689 & 752 & 688 & 753 & $+2$ & $+0.013963$\\
1 & 511 & 128 & 800 & 660 & 752 & 655 & 752 & $+5$ & $+0.026789$\\
2 & 511 & 32 & 800 & 689 & 761 & 688 & 761 & $+1$ & $+0.043752$\\
2 & 511 & 128 & 800 & 667 & 759 & 665 & 760 & $+3$ & $+0.046859$\\
3 & 575 & 32 & 800 & 752 & 769 & 753 & 767 & $-3$ & $+0.033377$\\
3 & 575 & 128 & 1,000 & 820 & 954 & 816 & 952 & $+2$ & $+0.027539$\\
\bottomrule\end{tabular}
\end{table}

\subsection{Optimizer-native primary comparison}
The optimizer-native study calibrates SGDW on a separate calibration root using the ordered encoder/head pairs $(10^{-4},10^{-2})$, $(3\cdot10^{-4},3\cdot10^{-2})$, $(10^{-3},10^{-1})$, and $(3\cdot10^{-3},3\cdot10^{-1})$. It maximizes Task-0 correct count, then minimizes CE, then takes the lower candidate index in a tie. The fourth pair is selected, with $J=199/200$ and CE $0.0082043$. Its three measurement roots are numbered 1--3 within this study. AdamW uses encoder/head rates $(10^{-5},10^{-3})$, betas $(0.9,0.999)$, and epsilon $10^{-8}$. Both conditions use weight decay $5\cdot10^{-4}$; SGDW uses momentum $0.9$ and no Nesterov acceleration.

All six Task-2 scans return saturated identity at all five prescribed steps. All six Task-3 scans select step 511, leaving the later phase opportunities unsearched. The two optimizer conditions share each root's construction and specified stochastic inputs, not the full optimizer-specific parent states. The six phase nonactivations have no execute/off--on outcomes and are not assigned fabricated zero contrasts.

\begin{table}[!htbp]\centering\footnotesize
\caption{Full optimizer-native readouts for the six activated Task-3 parents. Both horizons use the same 800-item panel. Root numbers refer only to this study; optimizer conditions are paired within a root.}
\label{tab:native_optimizer_full}
\begin{tabular}{llrrrrrrrrr}\toprule
Root&Optimizer&Anchor&$H$&$n$&$Y_{00}$&$Y_{01}$&$Y_{10}$&$Y_{11}$&$I_J$&$I_{-\mathrm{CE}}$\\\midrule
1 & AdamW & 511 & 32 & 800 & 693 & 747 & 697 & 749 & $-2$ & $-0.024153$\\
1 & AdamW & 511 & 128 & 800 & 663 & 753 & 667 & 751 & $-6$ & $-0.048234$\\
1 & SGDW & 511 & 32 & 800 & 704 & 735 & 710 & 741 & $0$ & $-0.105356$\\
1 & SGDW & 511 & 128 & 800 & 673 & 739 & 675 & 740 & $-1$ & $-0.118092$\\
2 & AdamW & 511 & 32 & 800 & 694 & 755 & 702 & 757 & $-6$ & $-0.017977$\\
2 & AdamW & 511 & 128 & 800 & 651 & 754 & 659 & 754 & $-8$ & $-0.043018$\\
2 & SGDW & 511 & 32 & 800 & 678 & 733 & 679 & 732 & $-2$ & $-0.024880$\\
2 & SGDW & 511 & 128 & 800 & 642 & 741 & 643 & 740 & $-2$ & $-0.029122$\\
3 & AdamW & 511 & 32 & 800 & 717 & 741 & 717 & 741 & $0$ & $-0.009414$\\
3 & AdamW & 511 & 128 & 800 & 691 & 744 & 691 & 744 & $0$ & $-0.009821$\\
3 & SGDW & 511 & 32 & 800 & 688 & 722 & 691 & 725 & $0$ & $-0.054441$\\
3 & SGDW & 511 & 128 & 800 & 653 & 731 & 655 & 730 & $-3$ & $-0.067756$\\
\bottomrule\end{tabular}
\end{table}

\subsection{Independent-rate update-scale sensitivity}
\label{app:optimizer_calibration}
The update-scale sensitivity study uses three measurement roots distinct from those of the optimizer-native comparison. The two diagnostic designs in Appendix~\ref{app:optimizer_diagnostics} reuse this same three-root panel. Their outcomes therefore do not constitute independent-root replications, and equal root numbers across the two principal optimizer studies carry no pairing information.

The update-scale sensitivity study independently varies encoder rates in $\{0.005,0.01,0.02,0.03\}$ and head rates in $\{0.003,0.006,0.01,0.02\}$, a frozen $4\times4$ grid on a separate calibration root. The AdamW reference attains $197/200$ Task-0 correct predictions. A candidate must finish within two counts of this reference. At steps 31, 63, 95, 127, and 159, separate head and sampled-encoder update-RMS ratios to the reference define
\begin{equation}
 e_c=\frac15\sum_{s}\left|\log\frac{u_{c,s}^{\rm SGDW}}{u_{c,s}^{\rm AdamW}}\right|,
 \qquad c\in\{\mathrm{enc},\mathrm{head}\}.
\end{equation}
The encoder probe uses three fixed parameter matrices in layers 0, 5, and 11, not the complete vector field. The rule minimizes $\max(e_{\rm enc},e_{\rm head})$ among quality-eligible candidates; ties use mean channel score, absolute Task-0 CE difference, and candidate index. Both scores must be at most $\log3$ before interaction measurement proceeds.

The selected pair $(0.01,0.02)$ achieves $198/200$ and CE $0.0139232$, with factors $\exp(e_{\rm enc})=1.435821$ and $\exp(e_{\rm head})=1.198306$. The gate is an aggregate sampled-channel scale criterion, not per-coordinate equality or a match of optimizer moments. Four fixed-anchor cells activate; Root~3 at 351 and 511 returns saturated identity. This is complementary sensitivity evidence under a specified scale criterion, rather than a match of full trajectories or a comparison of same-numbered roots with the native-optimizer study.

\begin{table}[!htbp]\centering\small
\caption{Complete update-scale sensitivity outcomes at the four activated parents. Root~3 has two nonactivations and no corresponding outcome rows. The sign change for Root~1 at anchor 511 uses a common panel across horizons.}
\label{tab:scale_sensitivity_full}
\begin{tabular}{lrrrrrrrrr}\toprule
Root&Anchor&$H$&$n$&$Y_{00}$&$Y_{01}$&$Y_{10}$&$Y_{11}$&$I_J$&$I_{-\mathrm{CE}}$\\\midrule
1 & 351 & 32 & 600 & 436 & 574 & 435 & 573 & $0$ & $+0.008095$\\
1 & 351 & 128 & 600 & 413 & 564 & 414 & 564 & $-1$ & $+0.003807$\\
1 & 511 & 32 & 800 & 599 & 755 & 602 & 754 & $-4$ & $+0.023977$\\
1 & 511 & 128 & 800 & 564 & 759 & 563 & 761 & $+3$ & $+0.021861$\\
2 & 351 & 32 & 600 & 437 & 571 & 437 & 571 & $0$ & $+0.000479$\\
2 & 351 & 128 & 600 & 412 & 578 & 412 & 579 & $+1$ & $+0.000080$\\
2 & 511 & 32 & 800 & 637 & 749 & 637 & 749 & $0$ & $-0.000084$\\
2 & 511 & 128 & 800 & 585 & 758 & 585 & 758 & $0$ & $-0.000356$\\
\bottomrule\end{tabular}
\end{table}

\FloatBarrier
\section{Coordination-policy results}
\label{app:support}
These complete-policy comparisons allow later states, accepted writes, and allocations to evolve within each arm. They are distinct from the common-parent four-cell experiments in Appendix~\ref{app:extensions}.

\subsection{Replay-retention comparisons}
\begin{table}[H]\centering
\caption{Terminal outcomes under different retained replay fractions. Each row has its own training history, so comparisons across rows do not isolate the effect of continuation at a fixed state. These are descriptive results, not a jointly randomized dose-response estimate.}
\label{tab:retention}
\begin{tabular}{rrrr}\toprule
Replay retained (\%)&Without Fiber $J$&With Fiber $J$&Difference\\\midrule
100&950&950&0\\
75&948&941&$-7$\\
50&942&938&$-4$\\
25&934&933&$-1$\\
12.5&929&926&$-3$\\
0&424&507&$+83$\\\bottomrule
\end{tabular}
\end{table}

On the 75\%-retained trajectory, let $\Pi^{(i)}$ retain its first $i$ accepted writes and disable every later write. Each pair branches from the corresponding parent reconstructed on the original trajectory, so $E_{351},E_{511},E_{671},E_{703}$ evaluate $\Pi^{(1)},\ldots,\Pi^{(4)}$, and their identity branches evaluate $\Pi^{(0)},\ldots,\Pi^{(3)}$. The measured differences are $0,-3,-1,-3$, totaling $-7$. This policy construction licenses Equation~\eqref{eq:telescoping}; equality of two summary scores alone would not. As a numerical cross-check, each preceding execute endpoint and the next identity endpoint have identical saved logits, predictions, correctness, and CE arrays on the common terminal panel. These equalities do not by themselves assert equality of unobserved checkpoint coordinates. The values remain specific to this nesting and continuation.

\subsection{Task-end performance of the coordination policies}
\begin{table}[H]\centering
\caption{Correct counts at task ends. The evaluation sizes are respectively 200, 400, 600, 800, and 1,000. A terminal improvement need not imply non-degradation at all earlier endpoints.}
\label{tab:taskends}
\begin{tabular}{lrrrrr}\toprule
Policy&159&319&479&639&799\\\midrule
Full replay + Fiber&196&393&573&751&950\\
Replay withdrawal&196&393&495&754&851\\
Replay deprioritization&196&393&569&748&948\\
Fiber-aligned reinforcement&196&393&574&750&953\\\midrule
Mask-matched random&196&393&573&751&953\\
Permuted signal&196&393&574&752&955\\
Shadow signal / no write&196&393&574&748&952\\\bottomrule
\end{tabular}
\end{table}

The full reinforcement policy has paired differences $(0,0,+1,-1,+3)$ relative to full replay with Fiber. The final gain is therefore a terminal observation, not a pathwise safety guarantee. The single-window comparison at $H=32$ gives $577$ versus $578$ on 600 items, while increasing the label-11 margin from $7.6531$ to approximately $7.7876$. Because the full policy permits further Fiber actions and allocation windows, the two contrasts concern different interventions rather than only different observation times.

\subsection{External workload and retained writes}
\begin{table}[H]\centering
\caption{External workload and retained state writes for Figure~\ref{fig:policies}. All rows except withdrawal use 799 replay events and 12,784 replay examples. The five reallocation policies make 96 or 112 replacements as described in Section~\ref{sec:coord}. The shadow policy accepts three proposals but retains none; the last column counts retained writes, not accepted proposals or total sensing cost.}
\label{tab:policies}
\begin{tabular}{lrrrr}\toprule
Policy&Terminal $J$&Replay events&Replay examples&Retained writes\\\midrule
Full replay + Fiber&950&799&12,784&3\\
Replay withdrawal&851&511&8,176&10\\
Replay deprioritization&948&799&12,784&10\\
Fiber-aligned reinforcement&953&799&12,784&3\\
Mask-matched random&953&799&12,784&3\\
Permuted signal&955&799&12,784&3\\
Shadow signal / no write&952&799&12,784&0\\\bottomrule
\end{tabular}
\end{table}

\subsection{Five-root policy comparison}
\label{app:policy_details}
Roots~1--5 are local to the five-root policy study. $S$, $G$, and $M$ denote standard full replay plus period-32 Fiber, guided reinforcement plus period-32 Fiber, and mask-matched random content plus period-32 Fiber. They share each root's initial conditions and exogenous bindings, not the subsequent endogenous state. All 15 full runs complete 800 ordinary updates with 799 replay events and 12,784 replay examples. The fresh-test split is used only at termination.

\begin{table}[!htbp]\centering\small
\caption{All terminal fresh-test counts and mean CE. The 1,500-item panel is common across the 15 trajectories. CE is secondary; lower is better.}
\label{tab:policy_test_full}
\begin{tabular}{lrrrrrr}\toprule
Root&$J_S$&$J_G$&$J_M$&$\mathrm{CE}_S$&$\mathrm{CE}_G$&$\mathrm{CE}_M$\\\midrule
1 & 1343 & 1344 & 1341 & 0.4558487 & 0.4449773 & 0.4508730\\
2 & 1351 & 1348 & 1350 & 0.4481850 & 0.4477481 & 0.4501625\\
3 & 1323 & 1327 & 1324 & 0.4612834 & 0.4606896 & 0.4618068\\
4 & 1363 & 1360 & 1363 & 0.3936347 & 0.3930567 & 0.3919950\\
5 & 1346 & 1344 & 1347 & 0.4147207 & 0.4141283 & 0.4158862\\
\bottomrule\end{tabular}
\end{table}

\begin{table}[!htbp]\centering\small
\caption{Terminal reused-validation readouts on 1,000 items. These do not override the fresh-test primary verdict.}
\label{tab:policy_validation_full}
\begin{tabular}{lrrrrrr}\toprule
Root&$J_S$&$J_G$&$J_M$&$\mathrm{CE}_S$&$\mathrm{CE}_G$&$\mathrm{CE}_M$\\\midrule
1 & 915 & 915 & 917 & 0.3796341 & 0.3671043 & 0.3744089\\
2 & 919 & 918 & 922 & 0.3118337 & 0.3138216 & 0.3133275\\
3 & 914 & 916 & 913 & 0.3712733 & 0.3682319 & 0.3737126\\
4 & 928 & 927 & 930 & 0.2735706 & 0.2711383 & 0.2722694\\
5 & 918 & 917 & 921 & 0.3362293 & 0.3340249 & 0.3362681\\
\bottomrule\end{tabular}
\end{table}

On reused validation, $G-S=(0,-1,+2,-1,-1)$, with mean $-0.2$ and median $-1$; it is mixed/inconclusive. $G-M=(-2,-4,+3,-3,-4)$ has mean $-2$ and median $-3$, satisfying the descriptive directionally-negative rule. $M-S=(+2,+3,-1,+2,+3)$ has mean $1.8$ and median $2$, satisfying the directionally-positive rule. On fresh test, however, $M-S=(-2,-1,+1,0,+1)$ is mixed, with mean $-0.2$ and median zero. No uniformly better content policy follows from these panels.

\begin{table}[!htbp]\centering\small
\caption{\textbf{Endogenous actions and actual reallocation workload.} Each arm has 20 consultations. ``Duplicate unchanged'' counts active steps on which a duplicate-UID native candidate is left intact under the specified duplicate-candidate rule; the ordinary update is still executed. Redirect counts need not match across divergent whole trajectories.}
\label{tab:policy_actions}
\begin{tabular}{llL{0.27\linewidth}rrr}\toprule
Root&Arm&Commit steps&Writes&Redirected rows&Duplicate unchanged\\\midrule
1 & S & 511, 543, 671 & 3 & 0 & 0\\
1 & G & 511, 671 & 2 & 59 & 2\\
1 & M & 511, 543, 671 & 3 & 86 & 5\\
2 & S & 511 & 1 & 0 & 0\\
2 & G & 511 & 1 & 32 & 0\\
2 & M & 511 & 1 & 32 & 0\\
3 & S & 511 & 1 & 0 & 0\\
3 & G & 511, 671 & 2 & 63 & 1\\
3 & M & 511 & 1 & 32 & 0\\
4 & S & 351, 511, 671 & 3 & 0 & 0\\
4 & G & 351, 511, 671 & 3 & 90 & 5\\
4 & M & 351, 511, 671 & 3 & 90 & 5\\
5 & S & 351, 671 & 2 & 0 & 0\\
5 & G & 351, 671 & 2 & 60 & 4\\
5 & M & 351, 671 & 2 & 60 & 4\\
\bottomrule\end{tabular}
\end{table}

The random arm's change mask is matched to its own shadow evaluation of the guided primitive; it is not imposed from the guided arm's realized trajectory. Exact own-trajectory mask agreement is recorded in all five random runs. Different future commits and eligible donor availability make whole-arm redirected totals differ. No globally unique UID invariant is imposed on native reservoir replay; duplicate candidates are preserved rather than deduplicated. The duplicate-candidate rule was fixed before the reported complete trajectories. All five specified roots and all three policy arms are reported; implementation provenance was retained and checked in internal verification records.

\subsection{Complete-policy distillation comparison}
\label{sec:lwf}
This complete-policy comparison tests LwF-style output distillation with and without the unchanged Fiber actuator over 800 updates. At each task boundary, before head growth, the branch's student becomes a frozen teacher for the next task. Teacher targets are evaluated on current-task inputs \cite{lwf}. The loss is Equation~\eqref{eq:lwf}, with $T=2$ and $\lambda=1$. This is the implemented LwF-style rule, not an optimized comparison across distillation variants. Both arms make 640 teacher forwards and use zero replay examples in the ordinary training loss. The Fiber actuator nevertheless accesses historical reservoir representatives, so the treatment is not memory-free or matched to the baseline in actual historical-information use. Branch-local teachers can diverge after Fiber writes; their refresh rule, not their outputs, is held fixed.

Table~\ref{tab:lwf} records terminal improvement from $414$ to $433$ and a CE reduction of $0.2663619$. The task-wise changes $(+22,+25,-7,-21,0)$ expose the nonuniform trade-off: the net $+19$ does not improve every task. The paired task-end differences are $(0,+21,+72,+115,+19)$ on their respective growing panels; terminal item flips comprise 57 gains and 38 losses.

\begin{table}[!htbp]\centering
\caption{\textbf{A complete-policy distillation outcome with task-level trade-offs.} Each task has 200 terminal evaluation items. Teacher-forward counts are equal, but complete cost and historical-information use are not. This is a two-policy comparison, not a four-cell distillation-interaction identification.}
\label{tab:lwf}
\begin{tabular}{lrrr}\toprule
Readout & LwF-only & LwF + Fiber & Difference\\\midrule
Task 0 correct&40&62&$+22$\\
Task 1 correct&16&41&$+25$\\
Task 2 correct&65&58&$-7$\\
Task 3 correct&96&75&$-21$\\
Task 4 correct&197&197&0\\\midrule
Total correct&414&433&$+19$\\
Mean CE&2.2769445&2.0105826&$-0.2663619$\\
Teacher forwards&640&640&0\\\bottomrule
\end{tabular}
\end{table}

\subsection{Continuation-dependent actuation profiles}
Table~\ref{tab:demand} summarizes all eligible Fiber decisions. The full-replay and Fiber-aligned reinforcement trajectories each make three commits; the LwF trajectory commits at all 15 points from 191 through 639, with summed committed write norms approximately $6.4961$. The final five consultations return rejected identity under the ideal-write cap. As Section~\ref{sec:actuator} establishes, this means the specified numerical route did not certify the prescribed map within budget. It is not a pure deficiency-severity measure or proof that every useful action was infeasible.

\begin{table}[!htbp]\centering
\caption{\textbf{Observed decisions under the same 20-point consultation schedule.} Consultations occur at steps 191, 223, $\ldots$, 799. The replay trajectories share decision times, not identical hidden states. Every LwF rejection is due to the ideal-write cap.}
\label{tab:demand}
\begin{tabular}{lrrr}\toprule
External continuation & Accepted repair & Saturated identity & Rejected identity\\\midrule
Full replay&3&17&0\\
Fiber-aligned reinforcement&3&17&0\\
LwF-style distillation&15&0&5\\\bottomrule
\end{tabular}
\end{table}

The late rejections coincide with a decline in the paired advantage, but neither a cap intervention nor a fixed-panel temporal analysis identifies that decline's cause. The positive terminal contrast supports a complete-policy application under distillation, with absolute accuracy still substantially below replay. These two LwF arms measure policy performance and actuation demand. Continuation dependence of a fixed proposal is identified by the separate four-cell experiment in Section~\ref{sec:distillationinteraction}.

\FloatBarrier
\section{Geometric analysis and optimizer diagnostics}
\label{app:outputgeometry}
The geometric analyses use stored frames and outputs from the allocation-attribution controls, without further model training or selection of an interpolation parameter for deployment. Stored raw logits were used for the mask-matched random, permuted-signal, and shadow/no-write controls. The archived aligned endpoint is a separate 953 reference. The final subsection reports auxiliary optimizer diagnostics on the update-scale sensitivity panel.

\subsection{Exact rank of the stored frame and the ideal repair set}
Treat each stored FP32 entry as its exact dyadic rational. Its denominator is a power of two, invertible modulo the odd prime 65521. Gaussian elimination over that finite field yields full row rank for the current and stacked frames of all nine accepted proposals in the three controls. A nonzero maximal minor modulo 65521 is nonzero over the rationals, proving the corresponding real row rank of these stored matrices; this is not a numerical singular-value threshold argument.

At the common step-351 frame, $C$ has 8 rows and $[C;\mathsf H]$ has rank 28, with $q=30$ and $d=769$. The affine dimensions in Corollary~\ref{cor:repairset} are therefore $22,230$ and, with centering, $21,489$. The independently recomputed ideal minimum norm is $0.068599529159949$; the archived numerical upper bound is $0.06859952917287827$. Substituting that bound into the unit-cap radius yields approximately $0.99764428$ as a lower-radius estimate for the \emph{ideal stored-frame set}. It is not a certified deployment radius for arbitrary rounded writes. The original runtime follows its one prescribed proposal path and native checks.

\subsection{Output superlevel sets and common-correct certificates}
For a fixed item set $S$, every inequality in Equation~\eqref{eq:outputcell} is strict and affine in $L$, so $\mathcal C_S$ is open and convex. If $L_0,L_1\in\mathcal C_S$, then for $0\le\alpha\le1$,
\begin{equation}
 \bigl[(1-\alpha)L_0+\alpha L_1\bigr]_{i,y_i}
 -\bigl[(1-\alpha)L_0+\alpha L_1\bigr]_{i,k}
 =(1-\alpha)m_{i,k}(L_0)+\alpha m_{i,k}(L_1)>0.
\end{equation}
Thus every common strictly correct item stays correct on the entire segment. In general $J\ge k$ can be represented, away from top-logit ties, as a union over size-$k$ sets $S$ of these regions; that union need not be convex.

The random and permuted endpoints share 951 correct items, giving the immediate lower bound $J\ge951$ on their segment. To sharpen it, we solve every correct-class versus competitor margin equation along each segment. Its coefficients are dyadic rationals, so all breakpoints are computed exactly as rational numbers. We count predictions on every open subinterval and at each breakpoint under the stored lowest-index argmax tie convention. This is exhaustive one-dimensional boundary analysis, not sampling a grid.

\begin{table}[!htbp]\centering
\caption{\textbf{Exact saved-logit segment certificates on 1,000 items.} The endpoints are three measured attribution controls. The counts apply to output mixtures, not parameter or replay-policy mixtures. Breakpoints include the segment endpoints.}
\label{tab:segments}
\begin{tabular}{lrrr}\toprule
Output segment & Minimum $J$ & Maximum $J$ & Breakpoints\\\midrule
Random--permuted &953&955&16\\
Random--shadow &952&954&31\\
Permuted--shadow &952&955&23\\\bottomrule
\end{tabular}
\end{table}

The union of the three segments is a path-connected subset with $J\ge952>950$. This does not certify the whole triangle's count minimum. For CE, by contrast, convexity of log-sum-exp gives
\begin{equation}
 \operatorname{CE}\!\left(\sum_j\alpha_jL_j\right)
 \le\sum_j\alpha_j\operatorname{CE}(L_j),
 \qquad \alpha_j\ge0,\quad\sum_j\alpha_j=1.
 \label{eq:ceconvex}
\end{equation}
All three control CEs are below the reference $0.2195535$: random $0.2166514$, permuted $0.2108371$, and shadow $0.2102889$. Their entire logit convex hull therefore has lower CE than that reference. Shadow has the lowest mean CE among these three but the lowest correct count, illustrating the distinction between the two readouts without inferring a common causal mechanism.

For a single endpoint, let $\gamma$ be the smallest correct-versus-competitor margin among its strictly correct items. Any perturbation $E$ with $\norm E_{\max}<\gamma/2$ preserves those items, because a pairwise margin changes by at most $2\norm E_{\max}$. The corresponding radii are $0.0111651421$, $0.0124964714$, and $0.0344238281$ for random, permuted, and shadow. These are local favorable \emph{output} neighborhoods, not special properties of Fiber. Continuity of a model's output map can give preimages of such neighborhoods; a quantitative parameter radius or training reachability requires additional information not established here.

\subsection{Head separation under a common ordinary suffix}
After the last reallocation window ends at step 703, the stored exogenous records agree on current UIDs, ordinary replay UIDs, admission state hashes, and random keys. Offers at 735, 767, and 799 are identities. The measured pre-offer head distances are:
\begin{center}
\begin{tabular}{lrrrr}\toprule
Step &703&735&767&799\\\midrule
$\norm{\Theta_M-\Theta_P}_F$&0.24563454&0.25139960&0.25226788&0.25537934\\\bottomrule
\end{tabular}
\end{center}
These values neither show disappearance nor monotone contraction of the head difference. They do not exclude decay in another state component or a readout-sensitive projection. Comparing only $J_M=953$ and $J_P=955$ cannot distinguish state contraction from output insensitivity.

\subsection{What the reallocation fraction does not bound}
Each control replaces 96 of 12,784 replay presentations, $0.75094\%$, relative to its own ordinary candidates. The random--permuted realized slot Hamming distance is instead 179: the policies can choose different donor positions. Random--shadow differs at 182 slots and permuted--shadow at 155. The candidate sequences are common; the modified sequences need not be. A slot fraction is not a perturbation norm or a utility bound.

At a common state with fixed example randomness and before clipping, replacing one row in a mean loss changes the gradient by
\begin{equation}
 \delta g_t=\frac{g^{\rm new}_t-g^{\rm old}_t}{b_t},
 \qquad b_t=n_{\rm current}+n_{\rm replay}.
\end{equation}
Here $b_t$ is usually 48 and is 24 at the short epoch tail, not 16. Once states diverge, gradients of all the unchanged rows may differ as well. Clipping and AdamW moments further prevent an interpretation based on the fraction alone.

\subsection{Native-event and linked-rate fixed-anchor diagnostics}
\label{app:optimizer_diagnostics}
The native-event diagnostic uses momentum SGD with coupled weight decay and selects rates $(0.001,0.1)$ by Task-0 quality from the same four rate pairs used by the native calibration. It observes the first legal event at 191 in all three measurement roots. The fixed-anchor diagnostic uses decoupled SGDW, unactuated prefixes, and anchors 351 and 511. Its calibration selects the linked-rate pair $(0.003,0.3)$ using sampled update scales under a Task-0 quality guard, but does not adequately match both channels. Only Root~1 activates, at both anchors; Roots~2 and~3 nonactivate at both.

These diagnostics retain information about recipe-dependent scale and activation. Their differing optimizer, calibration, and anchor conditions do not isolate the cause of an activation difference. In particular, sharing a root panel does not turn them into repeated measurements of one unchanged intervention design.

\begin{table}[!htbp]\centering\small
\caption{Optimizer diagnostic readouts. The native-event, coupled-decay design has three activated roots. The fixed-anchor, linked-rate design has two activated cells and four nonactivations. Both reuse the sensitivity-study roots; they are not additional independent roots for the native-optimizer comparison.}
\label{tab:optimizer_diagnostics}
\begin{tabular}{lrrrrrrrrr}\toprule
Root&Anchor&$H$&$n$&$Y_{00}$&$Y_{01}$&$Y_{10}$&$Y_{11}$&$I_J$&$I_{-\mathrm{CE}}$\\\midrule
\multicolumn{10}{l}{Native-event diagnostic: coupled-decay SGD}\\
1 & 191 & 32 & 400 & 380 & 387 & 380 & 387 & $0$ & $-0.000007$\\
1 & 191 & 128 & 400 & 366 & 393 & 366 & 393 & $0$ & $-0.000023$\\
2 & 191 & 32 & 400 & 381 & 389 & 380 & 389 & $+1$ & $+0.000593$\\
2 & 191 & 128 & 400 & 357 & 392 & 357 & 392 & $0$ & $-0.001189$\\
3 & 191 & 32 & 400 & 382 & 390 & 382 & 390 & $0$ & $-0.000472$\\
3 & 191 & 128 & 400 & 362 & 394 & 362 & 394 & $0$ & $-0.000848$\\
\midrule\multicolumn{10}{l}{Fixed-anchor diagnostic: linked-rate SGDW}\\
1 & 351 & 32 & 600 & 536 & 571 & 536 & 571 & $0$ & $-0.003822$\\
1 & 351 & 128 & 600 & 509 & 563 & 509 & 563 & $0$ & $-0.000195$\\
1 & 511 & 32 & 800 & 686 & 739 & 686 & 739 & $0$ & $-0.000959$\\
1 & 511 & 128 & 800 & 642 & 741 & 641 & 739 & $-1$ & $-0.001171$\\
\bottomrule\end{tabular}
\end{table}

\end{document}